\newcommand{\newthought}{\bigskip\noindent{}}
\let\originalleft\left
\let\originalright\right
\renewcommand{\left}{\mathopen{}\mathclose\bgroup\originalleft}
\renewcommand{\right}{\aftergroup\egroup\originalright}
\spnewtheorem{assumption}{Assumption}{\bfseries}{\rmfamily}
\begin{document}

%%fakesection abstract and title
\title{On the reusability of samples in active learning}
\author{Gijs van Tulder\,\orcidlink{0000-0003-1635-5423} \and Marco Loog\,\orcidlink{0000-0002-1298-8461}}
\authorrunning{Gijs van Tulder \and Marco Loog}
\institute{Gijs van Tulder \at
           Data Science group, Faculty of Science, Radboud University, Nijmegen, The Netherlands % \\
           \and
           Marco Loog \at
           Pattern Recognition Laboratory, Delft University of Technology, Delft, The Netherlands % \\
}
\date{%
  June 2022%
}

\maketitle

\begin{abstract}
  An interesting but not extensively studied question in active learning is that of sample reusability: to what extent can samples selected for one learner be reused by another?
  This paper explains why sample reusability is of practical interest, why reusability can be a problem, how reusability could be improved by importance-weighted active learning, and which obstacles to universal reusability remain.
  With theoretical arguments and practical demonstrations, this paper argues that universal reusability is impossible.
  Because every active learning strategy must undersample some areas of the sample space, learners that depend on the samples in those areas will learn more from a random sample selection.
  This paper describes several experiments with importance-weighted active learning that show the impact of the reusability problem in practice.
  The experiments confirmed that universal reusability does not exist, although in some cases -- on some datasets and with some pairs of classifiers -- there is sample reusability.
  Finally, this paper explores the conditions that could guarantee the reusability between two classifiers.
  \keywords{Active learning \and Sample selection \and Sample reusability \and Importance weighting}
\end{abstract}

\newpage
\raggedbottom

\section{Introduction}

Active learning is useful if collecting unlabelled examples is cheap but labelling those examples is expensive.
An active learning algorithm looks at a large set of unlabelled examples and asks an oracle to label the examples that look interesting.
Labelling can be expensive -- it may involve asking a human or doing an experiment -- so a good active learner will try to reduce the number of labels it requests, by querying only those examples that it expects will lead to the largest improvements of the model.
By not labelling useless examples, active learning can learn a better model at a lower labelling cost.

Active learning is an iterative process that optimises the sample selection for a particular classifier.
In each iteration, the current set of samples is used to train an intermediate classifier, which is then used to select a new sample to add to the training set (Figure~\ref{fig:al-cycle}).
For example: uncertainty sampling -- one of the first, simple active learning strategies \citep{Lewis1994} -- picks the example that is closest to the current decision boundary.
Newer strategies can be more advanced, but they still optimise their selection for a specific classifier.

\begin{figure}[h]
  \centering
    \begin{tikzpicture}[scale=0.4\textwidth, line width=0.3, >=stealth]
      \node (dummy) {};
      \node[above=0.8cm of dummy] (a) { train classifier };
      \node[above=0.8cm of a] (q) { initial training set };
      \node[right=0.1cm of dummy] (b) { select next example };
      \node[below=0.8cm of dummy] (c) { ask oracle for label };
      \node[left=0.1cm of dummy]  (d) { add to training set };
      
      \path[->, shorten <= 2pt, shorten >= 2pt]
         (a.0)  edge [bend left=40] (b.30)
         (b.-30) edge [bend left=40] (c.0)
         (c.180)  edge [bend left=40] (d.210)
         (d.150) edge [bend left=40] (a.180);
      \path[->, shorten <= 2pt, shorten >= 2pt]
         (q.south) edge (a.north);
    \end{tikzpicture}
  \caption{
    Active learning is an iterative process; most algorithms use the samples selected in a previous iteration to decide on the next sample.
  }
  \label{fig:al-cycle}
\end{figure}
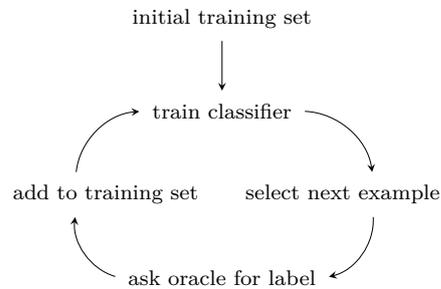

As a result, active learning will create a sample selection that is biased towards the classifier that was used to select the samples.
In some cases, however, the final classifier might differ from the classifier that was used during selection.
For example, if labelling samples is very expensive, the data has to be reusable for many different applications -- perhaps even applications that are not yet known when the selection is made.
Efficiency can be another reason: the final model may be too complex to retrain it for each selection step, in which case a simpler, faster model may be used for the selection phase \citep[e.g.,][]{Lewis1994a}.
And sometimes the best model for the problem can only be chosen after the examples have been selected: in natural language processing, for example, it is not uncommon to collect and label the data before choosing a language model \citep{Baldridge2004}.

If the active selection is used for a different classifier than the one used during selection, we might expect a lower performance than if the same classifier is used for both phases.
But at least we might ask that active learning does not reduce the performance, i.e., that it is not worse than using randomly selected samples.
This the definition of sample reusability: there is no sample reusability if a classifier that was not used to select the samples, performs worse if trained with the active selection than it would if trained on a random selection.

Ideally, active learning should produce sample selections that can be reused by different classifiers.
Unfortunately, the sample selection is biased in favour of a particular classifier.
This bias reduces the number of labels required to train the classifier, but it can also make the sample selection less reusable.
Furthermore, the bias can even introduce problems if the same classifier is used for selection and training, because most classifiers expect their training samples to be unbiased, and do not perform well on strongly biased data \citep{Zadrozny2004,Fan2007}.

Importance-weighted active learning (IWAL) is a recent active learning strategy that aims to correct the bias in the sample selection.
It combines a clever sample selection strategy with importance weighting, which allows it to provide unbiased estimators for the prediction error.
This can improve the results for any active learning application, and it might also make the sample selection more reusable by other classifiers.
In fact, the authors of the importance-weighted active learning framework claim that because the estimates are unbiased, the algorithm \textit{will} indeed produce a reusable sample selection \citep{Beygelzimer2010,Beygelzimer2011}.\footnote{%
  \raggedright%
    The reusability of importance-weighted active learning was also mentioned in the presentation ``Efficient Active Learning'' by Nikos Karampatziakis at the ICML 2011 Workshop on On‐line Trading of Exploration and Exploitation 2.
    This presentation is available at \url{http://videolectures.net/explorationexploitation2011_karampatziakis_efficie/} as a video and at \url{http://lowrank.net/nikos/pubs/exp2al.pdf} as slides.
    Another, undated presentation titled ``Active Learning via Reduction To Supervised Classification'' by John Langford et al., available at \url{http://hunch.net/~jl/projects/interactive/aalwoc.pdf}, lists sample reusability as one of the ``fringe benefits'' of importance-weighted active learning.
}

So far, there is no clear answer to why, when and whether there is sample reusability in active learning.
Early active learning papers occasionally mention the problem -- for example, \citet{Lewis1994a} discuss ``heterogeneous uncertainty sampling'' -- but overall sample reusability has received little attention.
In the most extensive study, \citet{Tomanek2011} tested several hypotheses with different classifiers and datasets.
The results were inconclusive: sample reuse was sometimes successful and sometimes was not, and there were no classifier combinations that always performed well.
None of the hypotheses, such as ``similar classifiers work well together'', could be confirmed.
\citet{Baldridge2004} and \citet{Hu2011} found similar results.
These studies share an important limitation: they only studied uncertainty sampling.
Sample reusability is probably not independent of the selection strategy.
Uncertainty sampling has well-known problems, because it creates a strongly biased sample selection and does nothing to reduce this bias \citep{Dasgupta2008}.

Importance-weighted active learning provides an unbiased estimates of the sample density, which may solve most of the problems caused by the bias.
But the unweighted sampling density of importance-weighted active learning is still biased, because the algorithm undersamples some areas of the sample space.
We argue that this may reduce the reusability of the sample selection.

This paper explores these topics in more detail.
Section~\ref{sec:sample-reusability} gives a definition of sample reusability and related concepts.
Section~\ref{sec:iwal} introduces the algorithm for importance-weighted active learning and explains how this algorithm solves some of the bias-related problems in active learning.
Section~\ref{sec:sample-reusability-in-iwal}, however, argues that importance weighted active learning does not completely solve the sample reusability problem.
We illustrate this with an experiment on a synthetic problem in Section~\ref{sec:experiment-synthetic} and with experiments on several public datasets in Section~\ref{sec:practice}.
Finally, Section~\ref{sec:conditions} discusses conditions that could guarantee sample reusability.

\section{Sample reusability}
\label{sec:sample-reusability}

Active learning works in two phases: first the selection strategy selects and labels examples -- perhaps training some intermediate classifiers to help in the selection -- and produces a set of labelled examples.
The classifier that is used in this phase is the \textit{selector}.
In the second step, the collected examples are used to train a final classifier that can be used to classify new examples.
This classifier is the \textit{consumer}.
The selector and the consumer do not have to be classifiers of the same type.
Perhaps the samples will be used to train a different type of classifier, or perhaps the samples will be used to solve a different type of problem, such as regression or structured learning.

Using a consumer that is different from the selector is \textit{sample reuse}: reusing the sample selection for something for which it was not selected.
\citet{Tomanek2010} gives a more formal definition:

\begin{definition}[Sample reuse]
  Sample reuse describes a scenario where a sample $S$ obtained by active learning using learner $T_1$ is exploited to induce a particular model type with learner $T_2$ with $T_2 \neq T_1$.
\end{definition}

Sample reuse is always possible, but the question is how well it works.
It is acceptable if a classifier performs slightly better if it is trained on samples selected for that classifier than if it is trained on samples selected for a different classifier.
But at least reusing the active selection should give a better performance than training on a random sample selection: if sample reuse is worse than random sampling, it would be better not to use active learning at all.
Therefore, we speak of \textit{sample reusability} if we expect that the consumer will learn more from the samples selected by the selector than from a random sample selection.
This is reflected in the definition of sample reusability by \citet{Tomanek2010}:
\begin{definition}[Sample reusability]
  Given a random sample $S_{RD}$, and a sample $S_{T_1}$ obtained with active learning and a selector based on learner $T_1$, and a learner $T_2$ with $T_2 \neq T_1$.
  We say that $S_{T_1}$ is reusable by learner $T_2$ if a model $\theta'$ learned by $T_2$ from this sample, i.e., $T_2\left(S_{T_1}\right)$, exhibits a better performance on a held-out test set $\mathcal{T}$ than a model $\theta''$ induced by $T_2\left(S_{RD}\right)$, i.e., $\mathrm{perf}\left(\theta',\mathcal{T}\right) > \mathrm{perf}\left(\theta'',\mathcal{T}\right)$.
\end{definition}
\noindent
Note that this definition of sample reusability is subject to chance.
It depends on the initialisation and samples presented to the algorithm.
In this paper, sample reusability means \textit{expected} sample reusability: does the algorithm, averaged over many runs, perform better with active learning than with random sampling?
That is, there is sample reusability if
\[
  \mathrm{E}\left[\mathrm{perf}\left(\theta',\mathcal{T}\right)\right] > \mathrm{E}\left[\mathrm{perf}\left(\theta'',\mathcal{T}\right)\right]
  \text{.}
\]

Sample reusability may be difficult to achieve if the selector and consumer require different samples.
The sample selection of active learning is biased towards samples that are useful for the selector.
If the consumer requires different samples, these samples will be underrepresented in the selection by active learning and the consumer might learn more from an unbiased random selection.
Correcting the bias is an important step to improve the sample reusability.

\section{Importance-weighted active learning}
\label{sec:iwal}

Importance-weighted active learning \citep{Beygelzimer2009} is a recent active learning strategy that is designed to reduce the bias-related problems of earlier strategies, by combining importance weighting with a biased random selection process.
Importance weighting helps to correct the bias, while the randomness in the selection ensures that the does not systematically exclude any area of the sample space.
These two properties make it possible to prove that importance-weighted active learning will converge to the same solution as random sampling \citep{Beygelzimer2010}.

Because importance-weighted active learning avoids and corrects some of the bias in the sample selection, it might also, perhaps, improve the sample reusability.
This idea from the importance-weighted active learning paper \citep{Beygelzimer2010,Beygelzimer2011} will be discussed in Section~\ref{sec:bias-correction} and further.

\subsection{Algorithm}
\label{sec:iwal-algorithm}

The importance-weighted active learning algorithm (Figure~\ref{fig:basic-iwal-algorithm}) is a sequential active learner.
It considers each example in turn and decides immediately if this new example should be labelled or not.
It uses a ``biased coin toss'' with a selection probability $P_x$ that defines the probability that example $x$ will be selected.
If an example is selected, it is labelled and added to the labelled dataset with a weight set to the inverse of the selection probability (Section~\ref{sec:bias-correction} explains why this is a useful choice).

\begin{figure}[h]
  \hrule
  \vspace{0.8em}

  For each new example $x$:
  \begin{enumerate}
    \item Calculate a selection probability $P_x$ for $x$.
    \item With probability $P_x$: query the label for $x$ and add $x$ to the labelled dataset with importance weight $w_x = \frac{1}{P_x}$.
  \end{enumerate}

  \hrule
  \vspace{0.8em}

  \caption{The importance-weighted active learning algorithm \citep{Beygelzimer2009}.}
  \label{fig:basic-iwal-algorithm}
\end{figure}

Different implementations of importance-weighted active learning have different definitions for the selection probability, but they share the same idea: the probability should be higher if the new example is likely to be interesting, and lower if it is not.
The selection probability should always be greater than zero to ensure that there are no areas in the sample space that will never be sampled.
Because the algorithm explores the full sample space it is guaranteed to converge to the optimal hypothesis, avoiding the missed cluster problem \citep{Dasgupta2008} illustrated in Figure~\ref{fig:missed-cluster-problem}.

The selection probability also determines the size of the training set.
An implementation with higher selection probabilities will select more samples than an implementation that tends to choose smaller selection probabilities.

\begin{figure}[h]
  \begin{center}
    \begin{tikzpicture}[scale=0.5\textwidth/15cm, line width=0.3, >=stealth]
      % gray boxes
      \fill[color=gray]
            (-4.5,0) -- (-2.5,0) -- (-2.5,1) -- (-4.5,1) -- (-4.5,0)
            (1.5,0) -- (0,0) -- (0,1) -- (1.5,1) -- (1.5,0)
            (7.5,0) -- (5.5,0) -- (5.5,1) -- (7.5,1) -- (7.5,0);

      % boxes
      \draw (-7.5,0) -- (7.5,0)
            (-7.5,0) -- (-5.5,0) -- (-5.5,1) -- (-7.5,1) -- (-7.5,0)
            (-4.5,0) -- (-2.5,0) -- (-2.5,1) -- (-4.5,1) -- (-4.5,0)
            (-1.5,0) -- (0,0) -- (0,1) -- (-1.5,1) -- (-1.5,0)
            (1.5,0) -- (0,0) -- (0,1) -- (1.5,1) -- (1.5,0)
            (7.5,0) -- (5.5,0) -- (5.5,1) -- (7.5,1) -- (7.5,0);

      % arrows
      \draw[->] (-5,2.2) -- (-5,1.2);
      \draw[->] (0,2.2) -- (0,1.2);

      \draw (-5.6,2.2) node[anchor=south west] {$w^*$}
            (-0.6,2.2) node[anchor=south west] {$w$};

      \draw (-6.5,-0.05) node[anchor=north] {45\%}
            (-3.5,-0.05) node[anchor=north] {5\%}
            (   0,-0.05) node[anchor=north] {5\%}
            ( 6.5,-0.05) node[anchor=north] {45\%};
    \end{tikzpicture}
  \end{center}
  \caption{
    The missed cluster problem:
    if the initial samples are drawn from the two groups in the middle, close-to-the-boundary sampling will keep querying samples near the initial boundary $w$.
    Because it never looks elsewhere, the classifier will never find the optimal boundary $w^*$.
    \citep{Dasgupta2008}
  }
  \label{fig:missed-cluster-problem}
\end{figure}
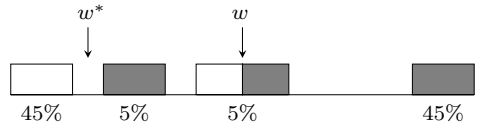

\subsection{Selection probability}
\label{sec:selection-probability}

\citet{Beygelzimer2010} define the selection probability of the $k$th sample as
\begin{align}
  P_k = \min \left\{ 1, \left(\frac{1}{G^2_k} + \frac{1}{G_k}\right) \frac{C_0 \log k}{ k - 1 } \right\} \\
  \text{\hspace{0.5em} where \hspace{0.5em}}
  G_k = \mathrm{err}\left(h'_k, S_k\right) - \mathrm{err}\left(h_k, S_k\right) \nonumber
  \text{.}
\end{align}
It compares the error of two hypotheses $h_k$ and $h'_k$ on the current set of labelled samples $S_k$.
$h_k$ is the current hypothesis, i.e., the hypothesis that minimises the error on $S_k$.
$h'_k$ is the alternative hypothesis: the hypothesis that disagrees with $h_k$ on the label of the sample $k$, but otherwise still minimises the error on $S_k$.
$C_0$ is a free parameter that scales the probabilities, such that choosing a larger $C_0$ will select more samples.

We will explain the intuition behind this definition.
The selection probability is large if the difference in error between the current hypothesis and the alternative hypothesis is small.
The errors of the two hypotheses give an idea of the quality of the current prediction.
By definition, the current hypothesis is optimal on the current set of labelled samples, but it is not necessarily optimal on the true distribution.
The alternative hypothesis cannot be better on the current set of samples, but it might be better on the true distribution.

The error difference predicts how likely it is that the alternative hypothesis is better than the current hypothesis.
If on the true data the alternative hypothesis is better than the current hypothesis, the errors on the current set are likely to be close together.
The current hypothesis can never be too far off because the active learning converges to the optimal solution.
On the other hand, if the current hypothesis is indeed better than the alternative, even on the true data, the error of the alternative hypothesis on the current data might well be higher than that of the current hypothesis.
Thus, the smaller the difference on the current set, the more likely it is that the alternative hypothesis is better.

The error difference also predicts how informative the new example could be.
If the difference is large, the current evidence is strongly in favour of one label and the new sample will probably not add new information.
If the difference is small, the current set of samples does not provide sufficient information to confidently label the new sample, so it might be useful to ask for the label.

\citet{Beygelzimer2010} use this intuition in their definition of the selection probability: a larger difference in error leads to a smaller selection probability.
Note also that the selection probabilities become smaller as the number of samples grows: the more samples seen, the more evidence there already is to support the current hypothesis.
In their paper, \citeauthor{Beygelzimer2010} provide confidence and label complexity bounds for their definition of the probability.
As expected, there is a trade-off between the quality of the prediction and the size of the sample selection.
Large $C_0$ provide large sample selections and results similar to random sampling, whereas small $C_0$ provide smaller sample selections that are sometimes much worse than random selections.

\subsection{Bias correction with importance weights}
\label{sec:bias-correction}

Like any active learning algorithm, importance-weighted active learning creates a biased sample selection.
There are more samples from areas that are interesting to the selector, and fewer from other areas.
Unlike other active learning algorithms, however, importance-weighted active learning uses importance weighting to correct this bias.

Importance-weighted active learning assigns a weight $w_x$ to each example in the selection: larger weights for examples from undersampled areas and smaller weights for the examples that have been oversampled.
Importance-weighted versions of classifiers and performance metrics use these weights in their optimisations.
For example, the importance-weighted zero-one classification error could be defined as the sum of the normalised weights of the misclassified samples.
If the importance weights are set to the correct values, this importance-weighted estimator is an unbiased estimator for the classification error.

The quality of importance weighting depends on the correct choice of the importance weights.
For example, the weights could be derived from density estimates by dividing the density in the true distribution by the density in the sample selection.
However, estimating densities is difficult, so this method can lead to variable and unreliable results.

Importance-weighted active learning has an advantage: because it makes a biased-random sample selection with a known selection probability, the algorithm knows exactly how biased the selection is.
By using the inverse of the selection probability as the importance weight, importance-weighted active learning can calculate the perfect correction to its self-created bias.
\citet{Beygelzimer2010} prove that the importance-weighted estimators of importance-weighted active learning are indeed unbiased.

To see how importance weighting corrects the bias, compare the probability density of an example in random sampling with that same probability density in importance-weighted active learning.
In random sampling, the probability density of $x$ is equal to its density in the true distribution: $P_{RD}\left(x\right) = P\left(x\right)$.
In importance-weighted active learning, the probability density of $x$ in the labelled set depends on two things: the density in the true distribution, i.e., the probability that the sample is offered to the algorithm, and the probability $s\left(x\right)$ that the algorithm decides to label the example: $P_{IWAL}\left(x\right) = P\left(x\right) \cdot s\left(x\right)$.
It gives the example the importance weight $\frac{1}{s\left(x\right)}$, so the expected importance-weighted density of $x$ is the same in both algorithms: $P_{RD}\left(x\right) = \frac{1}{s\left(x\right)} \cdot P_{IWAL}\left(x\right) = P\left(x\right)$.

Although importance weighting corrects the bias and produces unbiased estimators, there are still two important differences between the active selection and the random selection.
First, the importance weights introduce an extra source of variance in the density distribution that may cause suboptimal results.
Second, and more important, the weighted distribution may be correct, but the unweighted distribution is still different.
This means that the active selection has different levels of detail than the random selection.

\subsection{Increased variance caused by large importance weights}
\label{sec:iw-variance}

On average the importance-weighted distribution may be similar to the uniform distribution, but selections in individual runs will be different.
The variance in the distributions from active learning will be higher than in the distributions from random sampling, because of the importance weighting.
The algorithm gives large weights to samples with a small selection probability, to compensate for their relative undersampling in the average dataset.
However, if one of these rare samples is selected, its large importance weight will give it a disproportionally strong influence.
This effect averages out when the number of samples increases, but it could be a problem with small sample sizes.

To show this problem in a practical experiment requires a dataset with outliers that 1.\ cause a significant change in the classifier, 2.\ are so rare and outlying that they are not a problem for random sampling, but 3.\ are still common enough to be picked by the active learning algorithm often enough to be a problem.
Requirements 2 and 3 seem contradictory: the outliers must be rare and frequent at the same time.
One way to achieve these goals is to spread a large number of outliers over a large area.
Individually they are outliers, but there are enough outliers to ensure that one or two will be selected.

Consider this experiment on a dataset with circular outliers (Figure~\ref{fig:iwal-exp-circle}), with a consumer based on quadratic discriminant analysis (QDA) and the linear Vowpal Wabbit\footnote{%
  The Vowpal Wabbit is a system for fast online learning.
  It has a module for importance-weighted active learning.
  The source code is available via \url{http://hunch.net/~vw/} and \url{https://github.com/JohnLangford/vowpal_wabbit/}.
  The experiments in this paper used version 6.1.2 of the Vowpal Wabbit, commit 38734e4f in the Git repository.
} as the selector.
The Vowpal Wabbit implements the importance-weighted active learning algorithm with a selection probability similar to that by \citet{Beygelzimer2010}.
The dataset has two dense clusters in the middle and a circle of outliers orbiting those two clusters.
The outliers have a label that is the opposite of the label of the closest cluster.
By tuning the number of samples in the circle, it is possible to find a distribution where the QDA classifier trained on the active samples performs consistently worse than the QDA classifier trained on random samples (Table~\ref{tab:circle-test-wqda}).
Closer inspection of individual runs with these settings shows that in many cases the QDA classifier is thrown off-balance by an outlier with a heavy weight.

\begin{figure}[t]
  \centering
    \begin{tikzpicture}[scale=0.2\textwidth/15cm, line width=0.3, >=stealth]
      % gray boxes
      \fill[color=gray]
        (-1,-1) -- (0,-1) -- (0,1) -- (-1,1) -- (-1,-1)
        (0,-9.9) arc(-90:90: 9.9) (0,10.2) arc(90:-90:10.2);

      % boxes
      \draw
        (-1,-1) -- (0,-1) -- (0,1) -- (-1,1) -- (-1,-1) 
        (0,-1) -- (1,-1) -- (1,1) -- (0,1) -- (0,-1);
      \draw
        (0,-9.9) arc(-90:90: 9.9)
        (0,-10.2) arc(-90:90:10.2);
      \draw
        (0,9.9) arc(90:270: 9.9)
        (0,10.2) arc(90:270:10.2);

      \draw (-9.9,0) node[anchor=east] {0.1\%}
            (-1,0) node[anchor=east] {49.9\%}
            ( 1,0) node[anchor=west] {49.9\%}
            ( 9.9,0) node[anchor=west] {0.1\%};
    \end{tikzpicture}
  \caption{The `circle' dataset: a 2D problem with two dense clusters and a very sparse circle with samples from the opposite class.}
  \label{fig:iwal-exp-circle}
\end{figure}
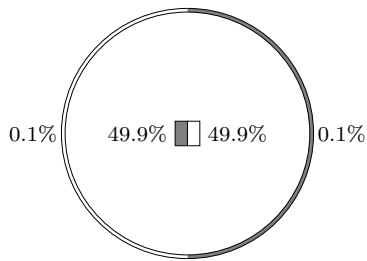

It is quite hard to trigger this behaviour on purpose: it does not happen at small sample sizes, the density of the outliers has to be just right, and this distribution does not cause problems for linear discriminant analysis or linear support vector machines.
Still, this example illustrates that the importance weights can sometimes introduce new problems.

\begin{table}[t]
  \centering
    \begin{tabular*}{\textwidth}{rcc}
      \hline
         Unlabelled samples      & IWAL error                 & Random error \\
      \hline
              10 \hspace{0.8cm}  &   0.15732 $\pm$ 0.00583    &   0.15651 $\pm$ 0.00586   \\
              50 \hspace{0.8cm}  &   0.05967 $\pm$ 0.00215    &   0.06077 $\pm$ 0.00229   \\
             100 \hspace{0.8cm}  &   0.05496 $\pm$ 0.00203    &   0.05203 $\pm$ 0.00211   \\
             500 \hspace{0.8cm}  &   0.04241 $\pm$ 0.00151    &   0.02837 $\pm$ 0.00097   \\
            1000 \hspace{0.8cm}  &   0.03725 $\pm$ 0.00131    &   0.02339 $\pm$ 0.00049   \\
            2500 \hspace{0.8cm}  &   0.03063 $\pm$ 0.00099    &   0.02009 $\pm$ 0.00035   \\
            5000 \hspace{0.8cm}  &   0.02538 $\pm$ 0.00080    &   0.01889 $\pm$ 0.00037   \\
            7500 \hspace{0.8cm}  &   0.02240 $\pm$ 0.00068    &   0.01720 $\pm$ 0.00033   \\
           10000 \hspace{0.8cm}  &   0.02075 $\pm$ 0.00052    &   0.01631 $\pm$ 0.00033   \\
      \hline
    \end{tabular*}
  \caption{Errors of QDA with IWAL and random sampling, on the circle dataset with circle density $0.001$, for different numbers of available (unlabelled) examples. The mean error with IWAL was significantly higher than with random sampling. (Mean $\pm$ std. of the mean.)}
  \label{tab:circle-test-wqda}
\end{table}

\subsection{Differences in the unweighted sampling density}
\label{sec:absolute-density}

The second issue is that even after the importance-weighted correction, the sample selections of importance-weighted active learning are still different from those of random sampling.
In importance-weighted active learning, the unweighted probability density of $x$, $P_{IWAL}\left(x\right)$, depends on the selection probability $s\left(x\right)$ that is determined by the algorithm.
On average, compared with random sampling and the true distribution, the active sample selection will have relatively more examples for which $s\left(x\right)$ is large and relatively fewer examples for which $s\left(x\right)$ is small.
This is normal, since the active learner would not be an active learner if it did not influence the sample selection, but it is a problem if the samples that are excluded are important to train a different consumer.

A simple experiment shows that an importance-weighted active learner has a preference for examples that are interesting to the classifier used to make the selection.
Figure~\ref{fig:plot-simple-line-density-20120705} shows the sample selection of the Vowpal Wabbit on a simple two-class problem with uniformly distributed samples.
Random selection samples with a uniform density, but most of the samples in the active selection are concentrated near the decision boundary at $x=0$.
Importance weighting gives the correct weighted probability density, but the absolute number of selected examples is still different.

\begin{figure}[h]
  \centering
    \includegraphics[width=0.45\textwidth]{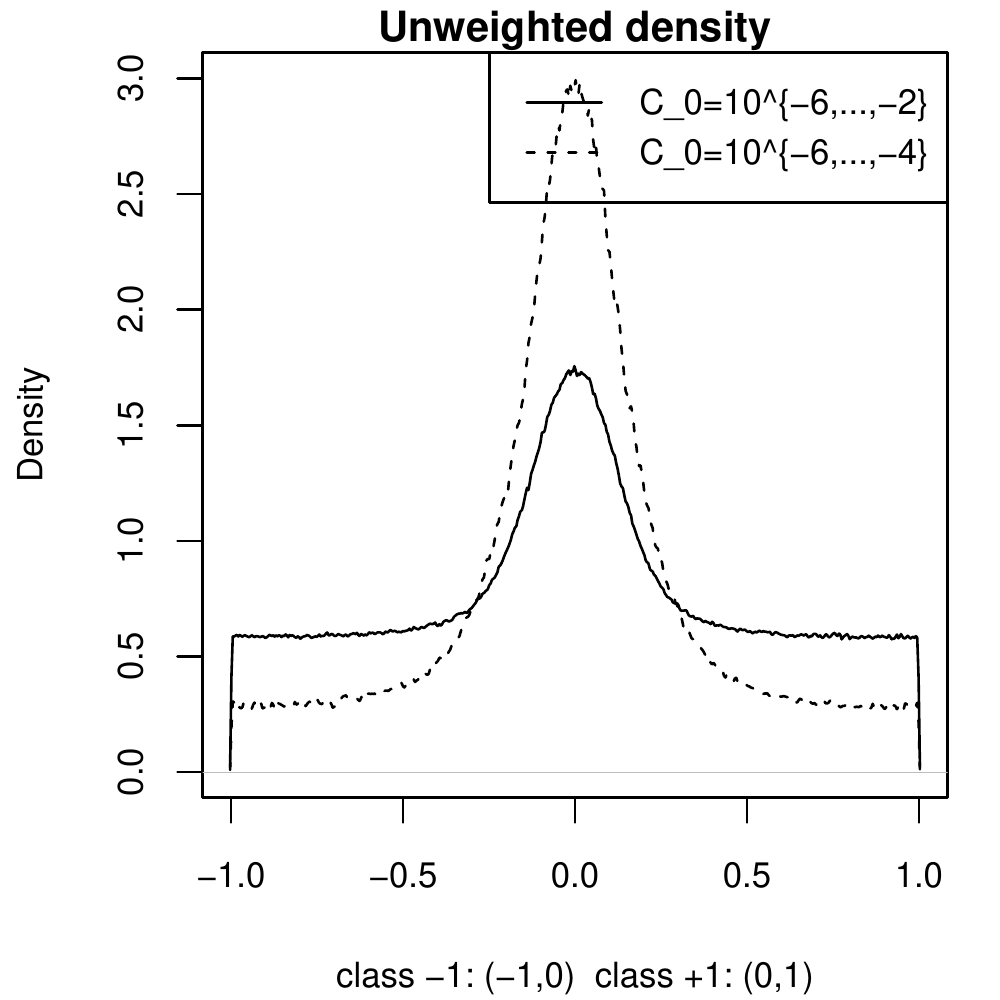}
    \includegraphics[width=0.45\textwidth]{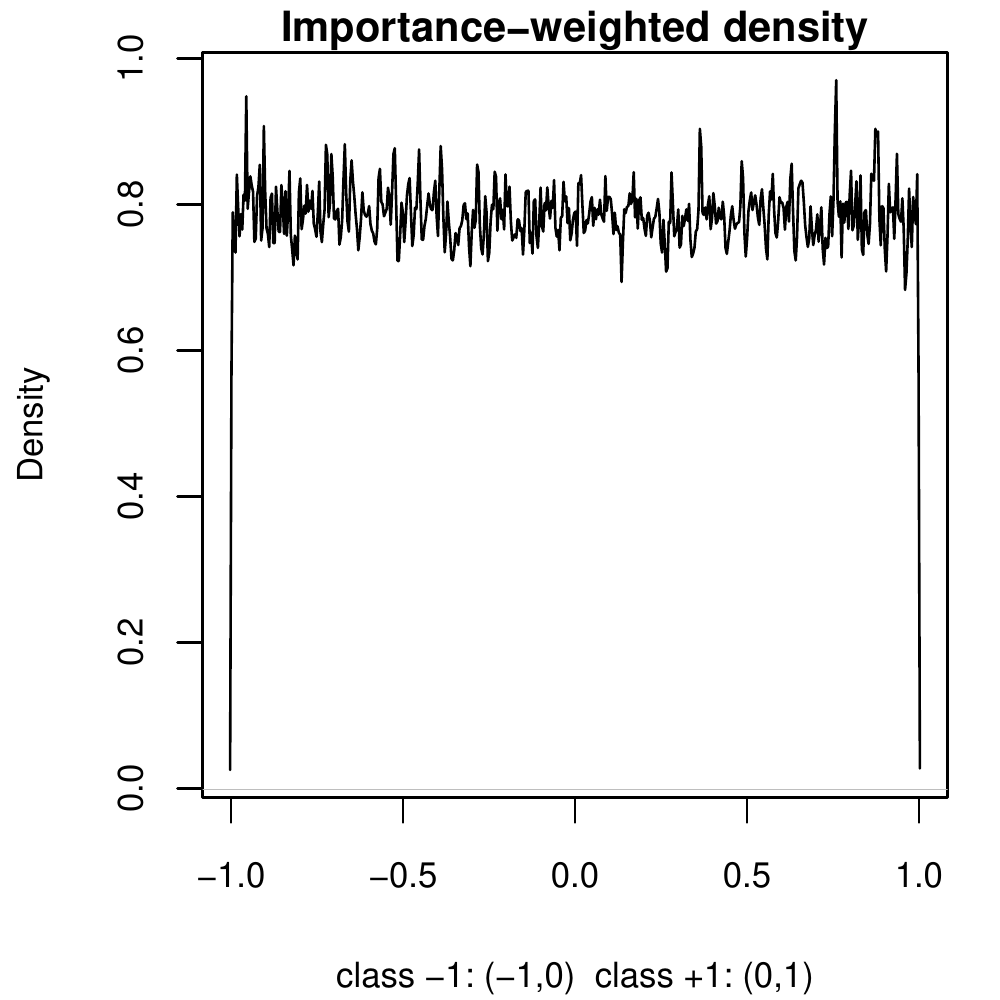}
  \caption{%
    Two plots of the density of active learning selections made by the Vowpal Wabbit (1000 runs with 1000 unlabelled examples, for various $C_0$).
    The 1D problem has two uniform classes, class $-1$ at $x=\left[-1,0\right)$ and class $+1$ at $x=\left[0,1\right]$.
    The unweighted density (left) shows that the algorithm selects most examples from the area around the decision boundary.
    The peak in the middle is more pronounced if the algorithm can select fewer examples (a smaller $C_0$, the dashed line in the plot on the right).
    The importance-weighted density (right) follows the true class distribution.
    Most queries outside that area were made with the less aggressive settings (a larger $C_0$, the solid line in the plot).
  }
  \label{fig:plot-simple-line-density-20120705}
\end{figure}

The unweighted sampling density affects the level of detail in the sample space.
In some areas, the active selection provides more detail than a random selection, but in other areas the active selection provides less detail.
This can be a problem if the underrepresented areas are useful the new classifier.

\section{Sample reusability with importance-weighted active learning}
\label{sec:sample-reusability-in-iwal}

Because it provides perfect importance weights, importance-weighted active learning can perhaps provide a better sample reusability than unweighted active learning strategies.
The skewed, biased density distribution of the active selection can be corrected into an unbiased estimate of the true density distribution.
This removes one component that limits sample reusability, but it is not enough because the sample selections are still different.

The sample selection of importance-weighted active learning has a different distribution of detail than random selections.
This has consequences for the sample reusability.
In the areas that are undersampled by active learning, the random selection can provide more detail than the active selection.
The consumer might depend on information from those areas.
If the active selection provides less detail than the random selection, the consumer might have learned more from the random selection: sample reusability cannot be guaranteed.

Whether this problem occurs in practice depends first of all on the correspondence between the selector and the consumer.
If both classifiers are interested in the same areas of the sample space, the examples will also be useful to both classifiers.
But even if only some parts of the areas of interest overlap, active learning could still be better if the improvement in one area is large enough to compensate for the loss in another area.

The number of samples is also important.
At small sample sizes the lack of detail can matter.
At larger sample sizes the difference becomes less noticeable -- there are more samples even in undersampled areas -- but the active selection may still have less detail than a random sample of equal size.

Since there are almost always some consumers that need details that the selector did not provide, the conclusion must be that sample reusability cannot be guaranteed.
The lack of detail in undersampled areas means that there is always the possibility that a consumer would have learned more from a random than from an active sample selection.

\section{Demonstrating a sample reusability problem}
\label{sec:experiment-synthetic}

It is possible to construct an experiment that shows that importance-weighted active learning does not always provide sample reusability.
Choose a selector and a consumer, such that the consumer can produce all of the solutions of the consumer, plus some solutions that the selector cannot represent.
More formally: the hypothesis space of the selector should be a proper subset of the hypothesis space of the consumer.
Next, construct a classification problem where the optimal solution for the selector is different from the optimal solution of the consumer.
Run the experiment: use importance-weighted active learning to select samples, train a consumer on the active selection and a consumer on a random selection of the same size. Measure the errors on a held-out test set.
For small sample sizes, expect consumer trained on a random sample selection to perform better than the consumer trained on the active selection.

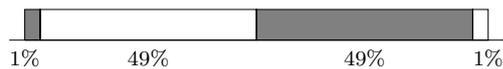
\begin{figure}[h]
  \centering%
    \begin{tikzpicture}[scale=0.5\textwidth/15cm, line width=0.3, >=stealth]
      % gray boxes
      \fill[color=gray]
            (-7.5,0) -- (-7,0) -- (-7,1) -- (-7.5,1) -- (-7.5,0)
            (0,0) -- (7,0) -- (7,1) -- (0,1) -- (0,0);

      % boxes
      \draw (-8,0) -- (8,0)
            (-7.5,0) -- (-7,0) -- (-7,1) -- (-7.5,1) -- (-7.5,0)
            (-7,0) -- (0,0) -- (0,1) -- (-7,1) -- (-7,0)
            (0,0) -- (7,0) -- (7,1) -- (0,1) -- (0,0)
            (7,0) -- (7.5,0) -- (7.5,1) -- (7,1) -- (7,0);

      \draw (-7.5,-0.05) node[anchor=north] {1\%}
            (-3.5,-0.05) node[anchor=north] {49\%}
            ( 3.5,-0.05) node[anchor=north] {49\%}
            ( 7.5,-0.05) node[anchor=north] {1\%};
    \end{tikzpicture}
  \caption{A simple 1D dataset. Most of the samples are in the middle clusters, each of the clusters on the side has $1\%$ of the samples.}
  \label{fig:detailed-line-diagram}
\end{figure}

For example, consider a one-dimensional two-class dataset (Figure~\ref{fig:detailed-line-diagram}) with four clusters in a $\mathord{+}\,\mathord{-}\,\mathord{+}\,\mathord{-}$ pattern: a tiny cluster of the $+$ class, a large cluster of $-$, an equally large cluster of $+$ and a tiny cluster of $-$.
The optimal decision boundary for a linear classifier is in the middle between the two large clusters.
It will misclassify the two tiny clusters, but that is inevitable.
A more complex classifier can improve on the performance of the simple classifier if it assigns those clusters to the correct class.

Active learning with the linear classifier as the selector selects most samples from the area near the middle, since that is where the linear classifier can be improved.
As a result, the active sample selection does not include many examples from the tiny clusters.
This makes it harder for the complex classifier to learn the correct classification for those areas.
The random selection includes more samples from the tiny clusters, which should give the complex classifier a better chance to learn the correct labels for those clusters.
In this example the expected performance with random sampling is better than with active learning: the sample selection from active learning is not reusable.

Plots of the learning curve (Figure~\ref{fig:plot-detailed-line-test-20120705}) show this effect in an experiment with the linear Vowpal Wabbit as the selector and a support vector machine with a radial basis function kernel as the consumer.
The plots show the results of experiments with different values for the parameter $C_0$, used in the definition of the selection probability in the Vowpal Wabbit.
In the definition of the selection probability, $C_0$ determines the aggressiveness of the active learner.
If $C_0$ is very small, the selection probabilities will be smaller and the number of selected samples will be small, too.
For larger $C_0$, the selection probability increases and so does the number of selected samples.
At some point the selection probabilities are so large that the active learner will select every sample.
These extreme cases are not representative for real active learning problems.

As predicted, the selection by the simple linear selector is of limited use to the more complex radial-basis support vector machine.
This combination of classifiers gives results that are worse than those of random sampling.
The effect is at its strongest at small sample sizes, but is still noticeable at larger the sample sizes.
The number of labelled samples can increase for two reasons: because the number of unlabelled examples is larger or because $C_0$ is higher.
In both cases the number of samples from undersampled areas increases and the consumer will receive more information on the samples in the edges.
Beyond this crossover point the active learner could perform better than random sampling, even if it undersamples some important areas, because the extra efficiency in the other areas more than compensates for the loss of precision.

\begin{figure}[ht]
  \centering
    \includegraphics[width=\textwidth]{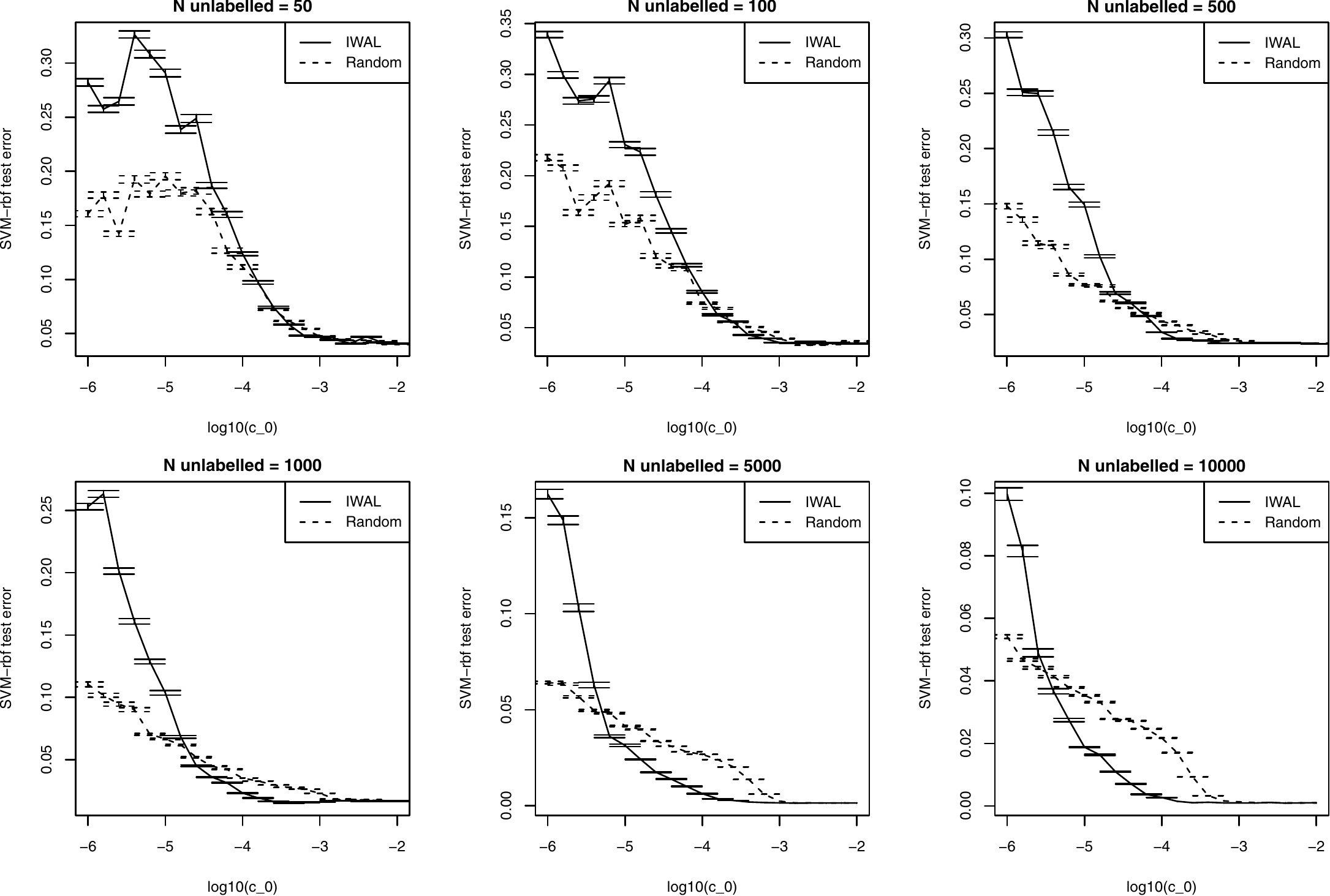}
  \caption{%
    The mean test error of a radial basis support vector machine classifier trained on samples selected by the Vowpal Wabbit importance-weighted active learning (IWAL) algorithm.
    The dashed lines show the same performance of a classifier trained on a random sample selection.
    The error bars indicate the standard deviation of the means.
    The dataset is the dataset shown in Figure~\ref{fig:detailed-line-diagram}.
  }
  \label{fig:plot-detailed-line-test-20120705}
\end{figure}

\section{Experiments}
\label{sec:practice}

The previous experiment shows that sample reusability problems can be triggered using a specially-designed dataset.
This section presents experiments on existing datasets from the UCI Machine Learning Repository and the Active Learning Challenge 2010, comparing the results of three selection strategies: random sampling, uncertainty sampling and importance-weighted active learning.
To evaluate the contribution of the importance weights, we also include importance-weighted active learning without using the importance weights.

\subsection{Datasets}

Two datasets are part of the UCI Machine Learning Repository \citep{UCImlrepo}: \texttt{car} and \texttt{mushroom}.
Two other datasets come from the Active Learning Challenge 2010\footnote{%
  \raggedright
  The datasets from the Active Learning Challenge 2010 are available from \url{http://www.causality.inf.ethz.ch/activelearning.php} and \url{http://jmlr.org/proceedings/papers/v16/}.
} \citep{Guyon2010,Guyon2011}: \texttt{alex}, a dataset generated by a Bayesian network model for lung cancer, and \texttt{ibn\_sina}, from an Arabic handwriting recognition problem.
Table~\ref{tab:uci-datasets} shows some properties of these datasets.
All datasets are two-class classification problems.

\begin{table*}[t]
  \begin{tabular*}{\textwidth}{llccc}
    \hline
    \hspace{1pt}Dataset    & Features                   & Examples   & Positive  & Test prop. \\
    \hline
    \multicolumn{5}{l}{\textit{UCI Machine Learning Repository}} \\
    \hspace{6pt}car        & 6, categorical             & 1728       & 30.0\%    & 10\%             \\
    \hspace{6pt}mushroom   & 20, categorical            & 8124       & 51.8\%    & 20\%             \\
    \multicolumn{5}{l}{\textit{Active Learning Challenge 2010}} \\
    \hspace{6pt}alex       & 11, binary                 & 10000      & 73.0\%    & 20\%             \\
    \hspace{6pt}ibn\_sina  & 92, binary, numerical      & 20722      & 37.8\%    & 20\%             \\
    \hline
  \end{tabular*}
  \caption{Some statistics of the datasets used in these experiments.}
  \label{tab:uci-datasets}
\end{table*}

\subsection{Sample selection strategies}

We used a customised version of the Vowpal Wabbit to perform the sample selection.
The Vowpal Wabbit comes with importance-weighted active learning, to which we added the strategies for uncertainty sampling and random sampling.
The fourth selection strategy, listed as `IWAL (no weights)' in the graphs, uses the sample selection from importance-weighted active learning but sets the importance weight of every example to $1$.
The Vowpal Wabbit creates linear classifiers, so the selector in these experiments is always a linear classifier.

\subsection{Classifiers}

For the consumers, we used six importance-weighted classifiers from R \citep{R2012}, most of them from the \texttt{locClass} package \citep{locClass2010}.
Linear regression (\texttt{lm}) is the most similar model to the linear model in the Vowpal Wabbit.
Linear discriminant analysis (\texttt{lda}) is also a linear classifier, but based on different principles.
We also used quadratic discriminant analysis (\texttt{qda}).
We used support vector machines with a linear kernel (\texttt{wsvm-linear}), a third-degree polynomial kernel (\texttt{wsvm-poly3}), and a radial-basis kernel (\texttt{wsvm-radial}).

For practical reasons, we had to remove some experiments from the results.
Occasionally, especially at small sample sizes, the sample selection strategy selected examples from only one class, making it impossible to train a classifier.
On some datasets the LDA and QDA classifiers complained about singular data.

\subsection{Implementation}

We split the dataset in training set and a test set.
For random selection (Random) we shuffled the training set and selected the first $n$ samples, for $n$ at fixed intervals from only a few samples to every available sample.
For uncertainty selection (US) we used the Vowpal Wabbit to rank the examples and selected the first $n$ samples.
For importance-weighted active learning (IWAL) we used the Vowpal Wabbit with multiple $C_0$ values from $10^{-9}$ to $10^{-2}$.
For the datasets in these experiments this range of $C_0$ produces sample selections with only a few samples, selections that include every available sample and everything in between.
For IWAL without weights we used the IWAL selection but set the weights to a constant $1$.
We report the mean classification accuracy averaged over 100 random train/test splits.

\subsection{Results}

The plots show a selection of the learning curves for these experiments.
The lines show the mean classification error on the test set, the semi-transparent bands indicate the standard deviation of the mean.

For random sampling and uncertainty sampling these plots are straightforward: they show the mean for each value of $n$.
The calculations for importance-weighted active learning are more complicated.
In importance-weighted active learning the number of samples is a random variable.
In our experiments there was seldom more than one value at a specific sample size.
The graphs show the mean and standard deviation for each group of results sharing the same $C_0$.
These group means are plotted at the median sample size for that $C_0$.

\begin{figure}[t]
  % ibn_sina -- wsvm-poly3
  \includegraphics[width=0.49\linewidth]{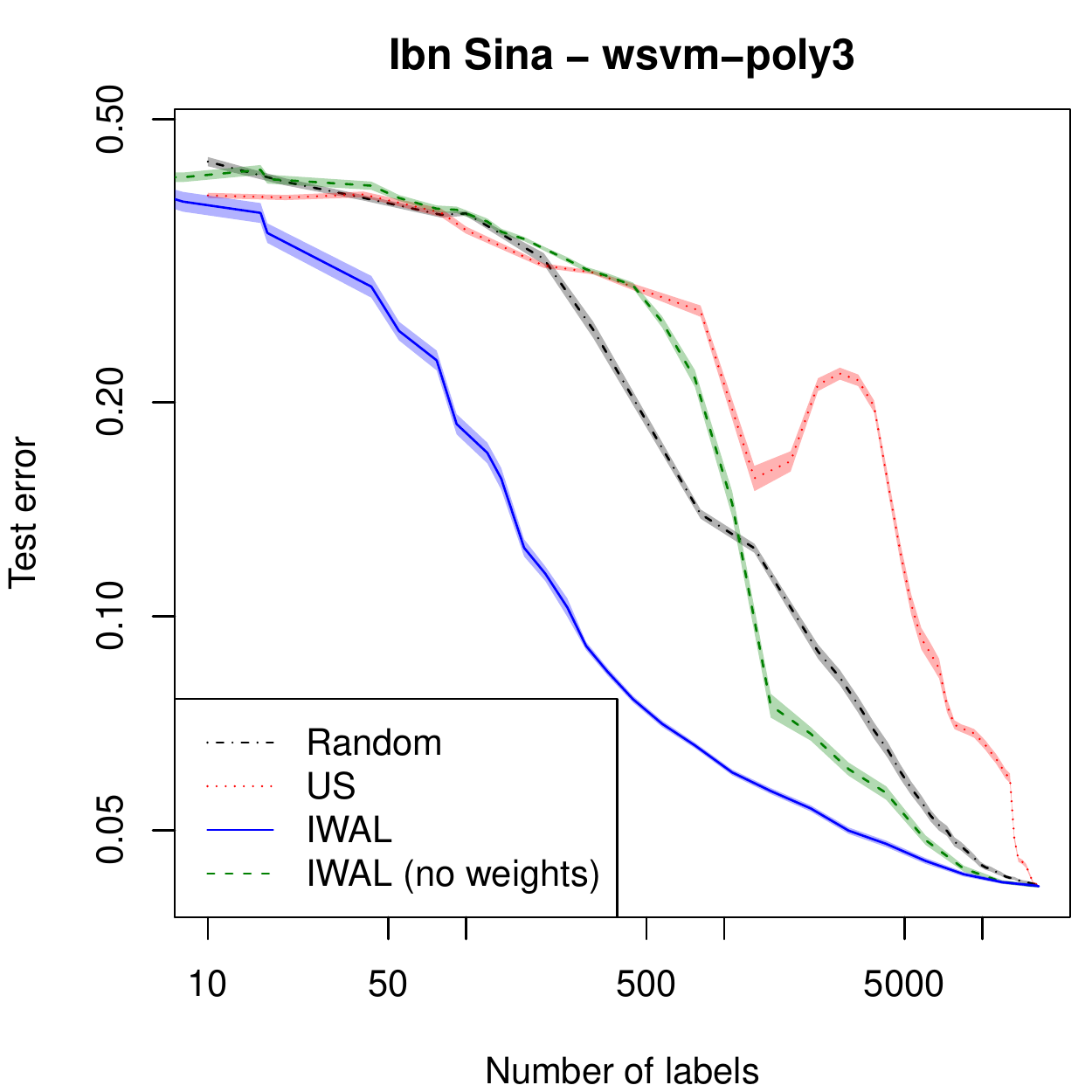}
  % ibn_sina -- wsvm-radial
  \includegraphics[width=0.49\linewidth]{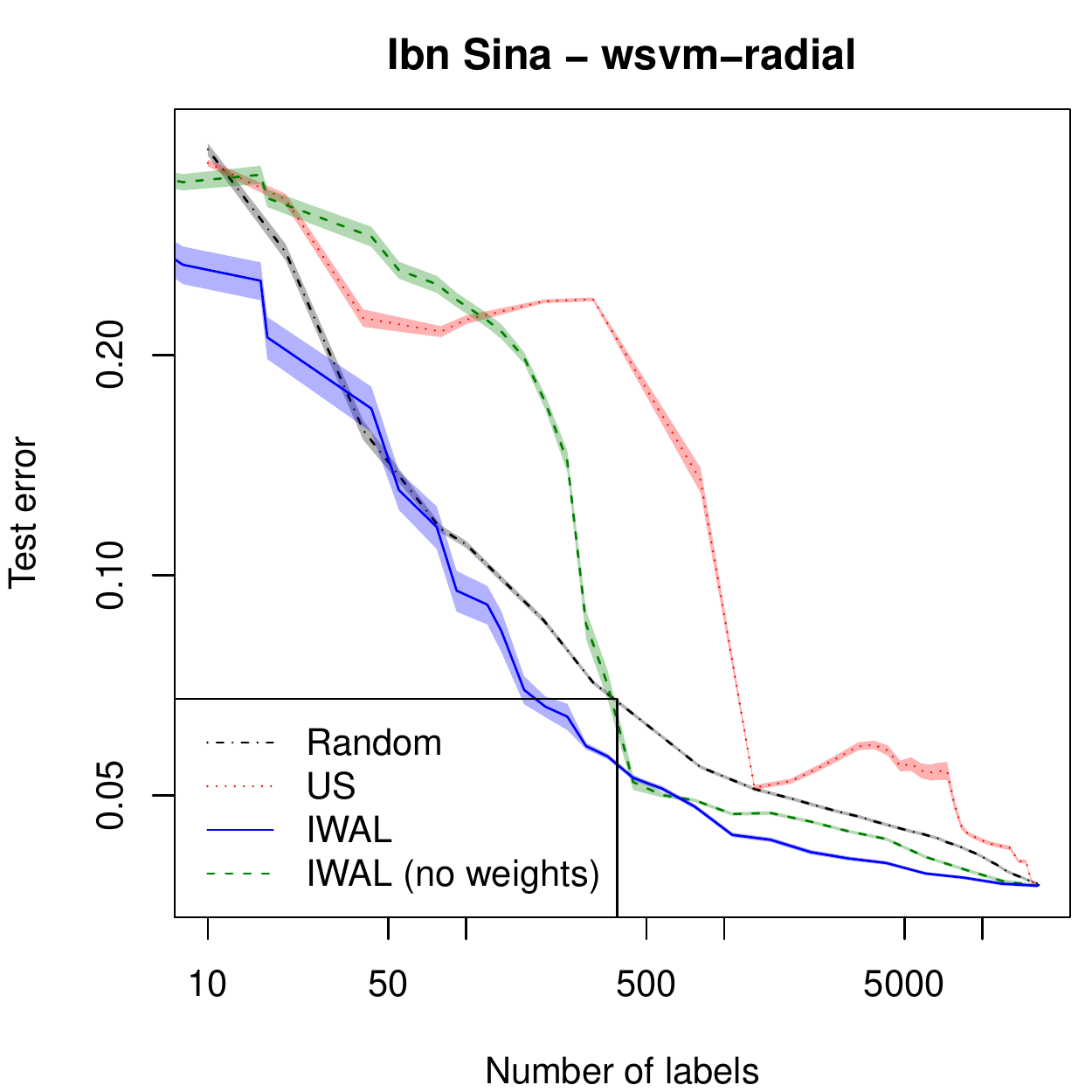}
  \caption{%
    Importance-weighted active learning on the Ibn Sina dataset produced a selection that was reusable by polynomial and radial-basis support vector machines.
  }
  \label{fig:positive-results-ibn-sina}
\end{figure}

When interpreting these graphs, be aware that the results at the maximum sample sizes may not representative for the behaviour of the algorithms on real data.
In these experiments there is a limited amount of data, so it is possible to select and label every example.
As a result, most learning curves end in the same point: there is no difference between learning strategies if they select every sample.
In practice there would be more samples than could ever be labelled, so it is best to look at the middle section of each graph.

In some experiments importance-weighted active learning clearly produced reusable selections.
For example, the learning curves on the Ibn Sina dataset (Figure~\ref{fig:positive-results-ibn-sina}) show good results of with the polynomial and radial-basis support vector machines.
The polynomial kernel achieved a higher accuracy with the selection from importance-weighted active learning than with the random selection.
The radial-basis kernel did not perform better with importance-weighted active learning, but also not much worse.
Uncertainty sampling, however, produced results that were much worse than random sampling.
It is hard to say why these classifiers show these results: apparently, on this dataset, they are interested in similar examples.

\begin{figure}[t]
  % mushroom -- lm
  \includegraphics[width=0.49\linewidth]{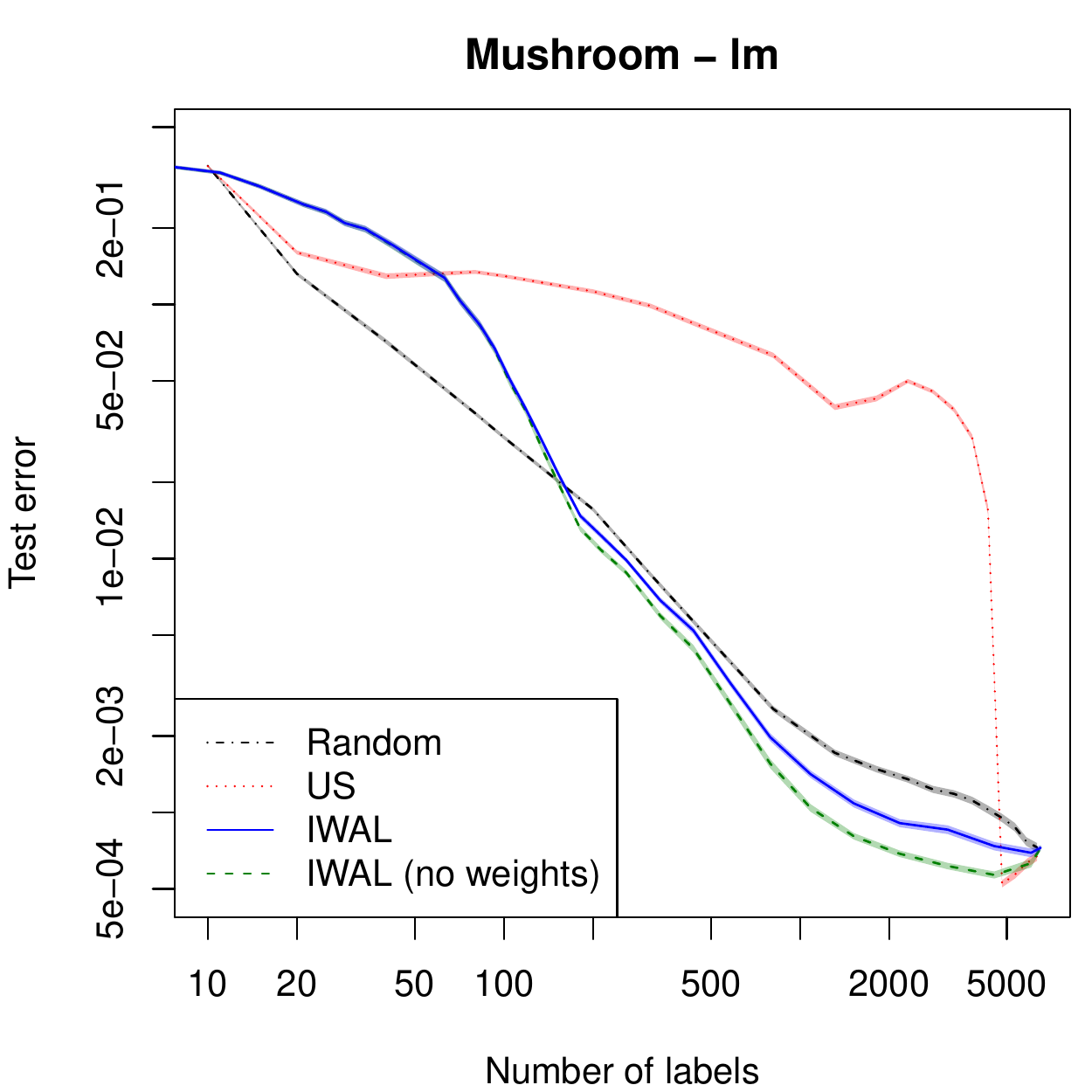}
  % mushroom -- wsvm-linear
  \includegraphics[width=0.49\linewidth]{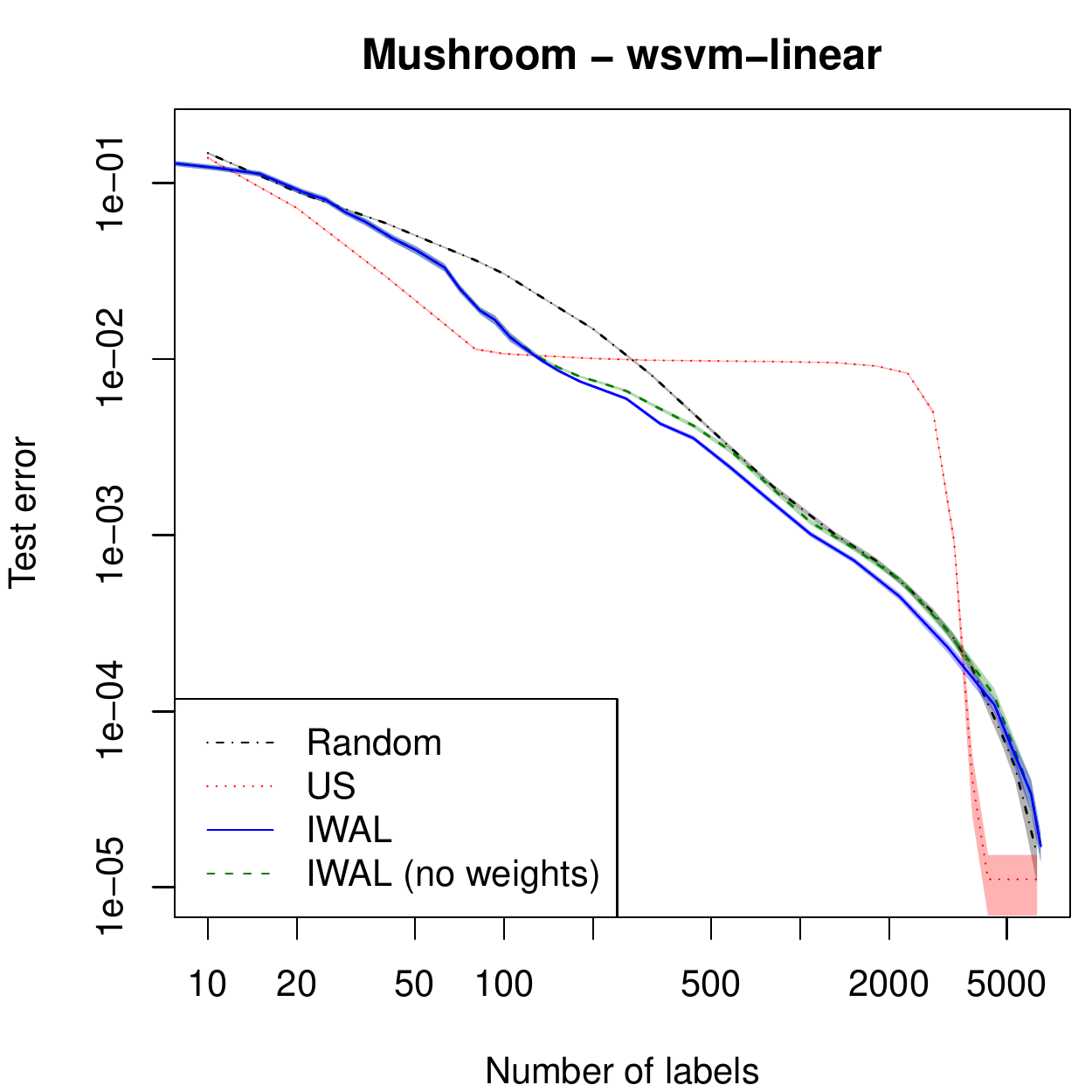}
  \caption{%
    On the mushroom dataset, importance-weighted active learning had an advantage over uncertainty sampling, which may have found a local optimum.
    Similar behaviour can be seen in other datasets.
  }
  \label{fig:missing-cluster-mushroom}
\end{figure}

Sometimes the results of importance-weighted active learning were close to those of random sampling, while uncertainty sampling performed much worse (see, for example, the mushroom dataset in Figure~\ref{fig:missing-cluster-mushroom}).
When this happens uncertainty sampling may have found a local optimum by completely disregarding some areas in the sampling space.
This is a known problem of uncertainy sampling, which may miss some clusters in the data \citep{Dasgupta2008}.
Importance-weighted active learning, where the selection probability is always larger than zero, is less likely to miss these clusters.
This makes active learning safer: the results for importance-weighted active learning might not have been better than random sampling, but they were also not much worse.

However, for some datasets importance-weighted active learning did perform worse than the other sampling methods (e.g., Figure~\ref{fig:importance-weighting-bad}).
This shows that the sample selection by importance-weighted active learning is not always reusable.

\begin{figure}[t]
  % car -- lm
  \includegraphics[width=0.49\linewidth]{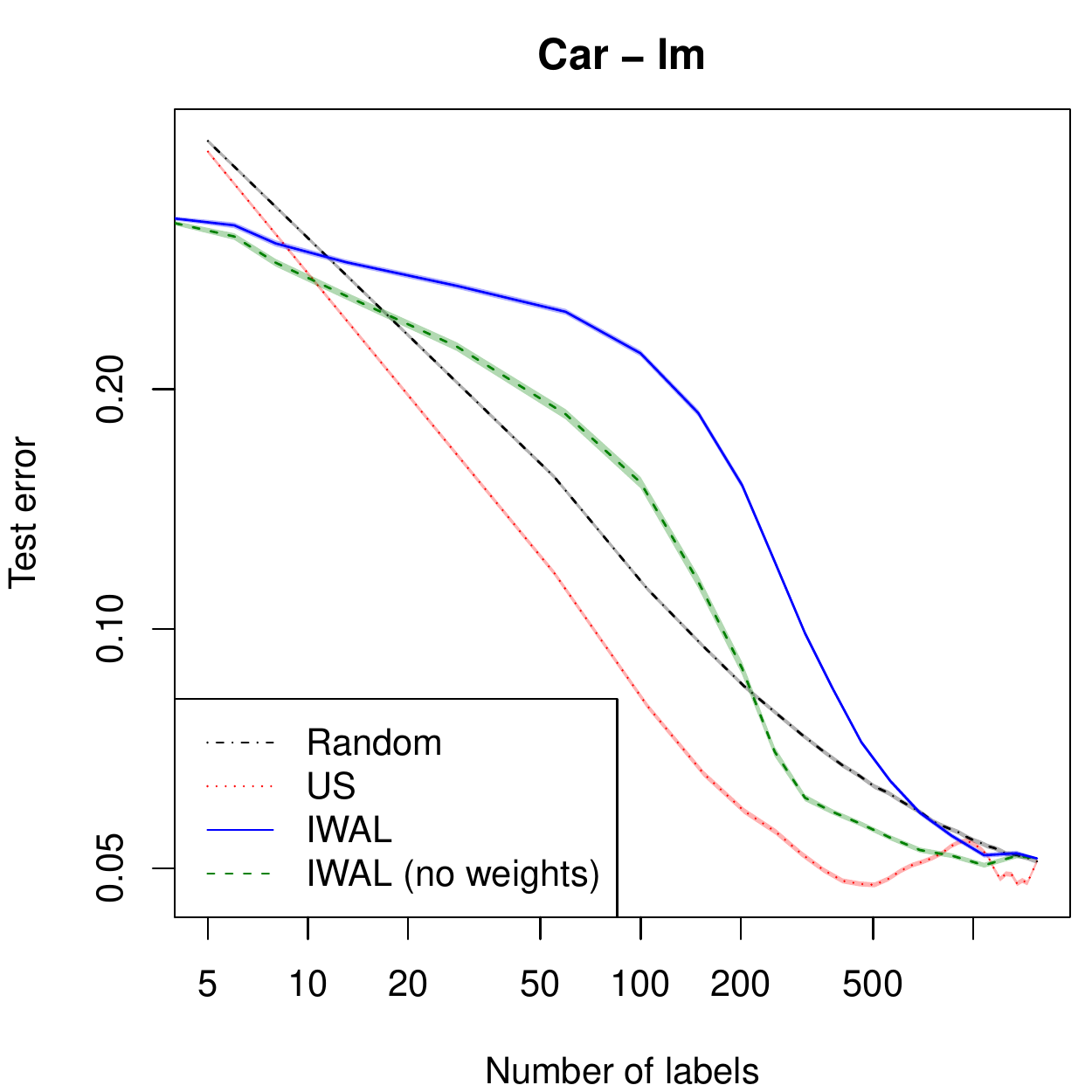}
  % alex -- wlda
  \includegraphics[width=0.49\linewidth]{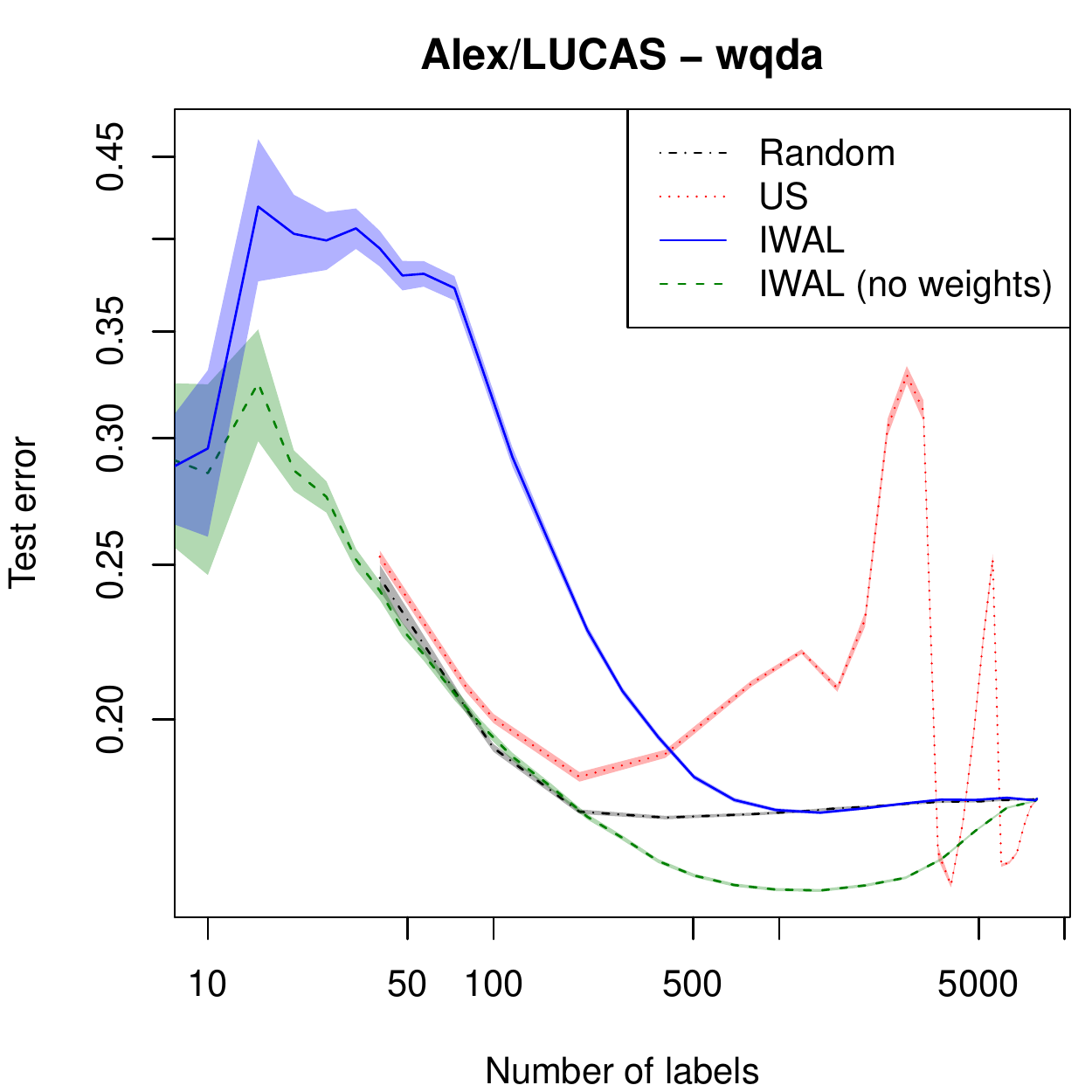}
  \caption{%
    Sometimes importance weights can be the problem: removing the weights from the importance-weighted selection improves the results.
    This happens most often with LDA/QDA consumers and on the \texttt{alex} dataset and could be due to the instability that is introduced by large importance weights.
  }
  \label{fig:importance-weighting-bad}
\end{figure}

In quite a few cases, removing the importance weights improved the performance (e.g., Figure~\ref{fig:importance-weighting-bad}).
For example, this happened with LDA classifiers and on the \texttt{alex} dataset.
Perhaps this is a result of the variability from the large importance weights.
It is curious to see that when there is no reusability for importance-weighted active learning, the unweighted selection often \textit{does} show reusability.
Unfortunately, simply removing the importance weights is not a solution: there are also examples where importance-weighted active learning does work but the unweighted version does not (e.g., Figure~\ref{fig:positive-results-ibn-sina}).

From these results it becomes clear that importance-weighted active learning is only a partial solution for the reusability problem.
There are certainly cases where the samples are reusable, and perhaps they are reusable in more cases than with uncertainty sampling.
But there are also examples where the selection from importance-weighted active learning is not reusable, which suggests that it does not guarantee reusability in every situation.

\section{Conditions for reusability}
\label{sec:conditions}

The theoretical discussion and the experiments show that there is no sample reusability between \textit{every} pair of classifiers on \textit{every} possible problem, but also that there are combinations where sample reusability does appear.
When can a selector-consumer pair guarantee good results?
This section explores possible conditions for sample reusability.

\subsection{Hypothesis spaces and hypotheses}
Let $\mathcal{H}_{sel}$ and $\mathcal{H}_{cons}$ be the hypothesis spaces of the selector and consumer, respectively.
The hypothesis space is the set of all hypotheses that a classifier algorithm can form: for example, the hypothesis space of a linear classifier on a two-dimensional problem is the set of all lines.
Let $\mathrm{err}\left(h, S\right)$ denote the (negative) performance of a hypothesis $h$ on a sample selection $S$, where a smaller $\mathrm{err}\left(h, S\right)$ is better.
If $S$ is an importance-weighted sample selection with weights $W$, $\mathrm{err}\left(h, S, W\right)$ is the importance-weighted error.
Let $\mathrm{err}\left(h\right)$ be the expected performance of hypothesis $h$ on new, unseen data from the true distribution.

\begin{definition}[Optimal hypotheses]
  Define the optimal hypothesis for the selector, $h^*_{sel}$, and the optimal hypothesis for the consumer, $h^*_{cons}$, as the hypotheses that minimise the expected error on unseen data:
  \begin{align}
    h^*_{sel} &= \arg\min \left\{ \mathrm{err}\left(h\right) : h \in \mathcal{H}_{sel} \right\}
    \text{,} \\
    h^*_{cons} &= \arg\min \left\{ \mathrm{err}\left(h\right) : h \in \mathcal{H}_{cons} \right\}
    \text{.}
  \end{align}
\end{definition}

Let $S_{AL,n}$ and $S_{RD,n}$ be the sample selections of $n$ labelled samples, made with active learning (AL) and random sampling (RD), respectively.
For importance-weighted active learning, $W_{AL,n}$ represents the importance weights assigned to the samples in $S_{AL,n}$.

\begin{definition}[Optimal hypotheses on the sample selections]
  Define the optimal hypotheses for the selector and consumer on the importance-weighted active sample selection and on the random sample selection as
  \begin{align}
    h_{sel,AL,n} &= \arg\min \left\{ \mathrm{err}\left(h, S_{AL,n}, W_{AL,n}\right) : h \in \mathcal{H}_{sel} \right\} \text{,} \\
    h_{sel,RD,n} &= \arg\min \left\{ \mathrm{err}\left(h, S_{RD,n}\right) : h \in \mathcal{H}_{sel} \right\} \text{,} \\
    h_{cons,AL,n} &= \arg\min \left\{ \mathrm{err}\left(h, S_{AL,n}, W_{AL,n}\right) : h \in \mathcal{H}_{cons} \right\} \text{,} \\
    h_{cons,RD,n} &= \arg\min \left\{ \mathrm{err}\left(h, S_{RD,n}\right) : h \in \mathcal{H}_{cons} \right\} \text{.}
  \end{align}
\end{definition}

\begin{assumption}[Functional classifiers]
  Assume that the classifier indeed minimises $\mathrm{err}\left(h,S\right)$, that is, that it minimises the empirical risk and does indeed select these hypotheses when given $S_{AL,n}$ or $S_{RD,n}$.
\end{assumption}

\subsection{Expected error and sample reusability}
Before defining any conditions for reusability, we should take a closer look at the hypotheses that are selected with active learning and with random sampling.
More specific, we should derive the expected error of these hypotheses, both on the active and random sample selections and on unseen data.
Sample reusability is defined in terms of these expected errors.

First, note that random sampling gives unbiased estimates of the error: $\mathrm{E}\left[\mathrm{err}\left(h,S_{RD,n}\right)\right] = \mathrm{err}\left(h\right)$, the expected error of a hypothesis $h$ on a random sample selection is equal to the expected error on unseen data.
Or: the empirical risk averaged over all possible random sample selections is equal to the true error.
\begin{lemma}
  The optimal hypothesis $h^*_{cons}$ -- the hypothesis with the smallest expected error on unseen data -- will also have the smallest expected error on the random sample selection:
  \begin{align}
    \mathrm{E}\left[\mathrm{err}\left(h^*_{cons},S_{RD,n}\right)\right]
      &= \min \left\{ \mathrm{E}\left[\mathrm{err}\left(h, S_{RD,n}\right)\right] : h \in \mathcal{H}_{cons}\right\} \text{.}
  \end{align}
\end{lemma}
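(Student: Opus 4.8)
The plan is to reduce the statement to the unbiasedness of the empirical risk under random sampling, which is recorded in the text immediately above: for any \emph{fixed} hypothesis $h$, $\mathrm{E}\left[\mathrm{err}\left(h,S_{RD,n}\right)\right] = \mathrm{err}\left(h\right)$. The point to keep straight is that $h^*_{cons}$ does not depend on the realised sample $S_{RD,n}$ — by the Definition of optimal hypotheses it is defined purely in terms of the true distribution — so the unbiasedness identity applies to it verbatim, and no interchange of $\min$ and $\mathrm{E}$ is ever needed.

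First I would apply unbiasedness to $h = h^*_{cons}$, rewriting the left-hand side of the claim as $\mathrm{E}\left[\mathrm{err}\left(h^*_{cons},S_{RD,n}\right)\right] = \mathrm{err}\left(h^*_{cons}\right)$. Next I would apply unbiasedness to every $h \in \mathcal{H}_{cons}$ separately, so that $\min\left\{\mathrm{E}\left[\mathrm{err}\left(h, S_{RD,n}\right)\right] : h \in \mathcal{H}_{cons}\right\} = \min\left\{\mathrm{err}\left(h\right) : h \in \mathcal{H}_{cons}\right\}$; here I am only replacing each member of an indexed family by an equal quantity, not swapping an expectation with a minimisation. Finally, by the definition of $h^*_{cons}$ as an $\arg\min$ of $\mathrm{err}\left(h\right)$ over $\mathcal{H}_{cons}$, this last minimum equals $\mathrm{err}\left(h^*_{cons}\right)$ as well, so the two sides of the asserted equality coincide.

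The only thing that needs care — and it is bookkeeping rather than a genuine obstacle — is to avoid conflating this statement with the superficially similar but \emph{false} assertion that the empirically optimal hypothesis $h_{cons,RD,n}$ is unbiased; that fails precisely because $h_{cons,RD,n}$ is itself a function of $S_{RD,n}$, which introduces Jensen-type slack. I would also note in passing that the $\arg\min$ in the Definition of optimal hypotheses is assumed attained, so the right-hand side is well defined, and that the relevant expectations are assumed finite, so there are no integrability subtleties to dispatch. With these remarks in place the argument is a two-line substitution.
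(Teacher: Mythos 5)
Your argument is correct and is essentially identical to the paper's own proof: both apply the unbiasedness identity $\mathrm{E}\left[\mathrm{err}\left(h,S_{RD,n}\right)\right] = \mathrm{err}\left(h\right)$ to $h^*_{cons}$ and pointwise to each $h \in \mathcal{H}_{cons}$, then invoke the definition of $h^*_{cons}$ as the minimiser of the true error. Your added remarks (that $h^*_{cons}$ is sample-independent, so no $\min$/$\mathrm{E}$ interchange is needed, unlike for $h_{cons,RD,n}$) are accurate clarifications of the same three-line chain of equalities.
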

\begin{proof}
  \begin{align*}
    \mathrm{E}\left[\mathrm{err}\left(h^*_{cons},S_{RD,n}\right)\right]
      &= \mathrm{err}\left(h^*_{cons}\right) \\
      &= \min \left\{ \mathrm{err}\left(h\right) : h \in \mathcal{H}_{cons}\right\} \\
      &= \min \left\{ \mathrm{E}\left[\mathrm{err}\left(h, S_{RD,n}\right)\right] : h \in \mathcal{H}_{cons}\right\} \text{.}
  \end{align*}
\end{proof}

Importance-weighted active learning has the same property.
Because it uses importance weighting, importance-weighted active learning produces an unbiased estimator of the error, i.e., $\mathrm{E}\left[\mathrm{err}\left(h,S_{AL,n}, W_{AL,n}\right)\right] = \mathrm{err}\left(h\right)$ \citep{Beygelzimer2009}.
The average $\mathrm{err}\left(h,S_{AL,n},W_{AL,n}\right)$ over all importance-weighted sample selections $S_{AL,n}$ is equal to the expected error on unseen data.
\begin{lemma}
  The expected error of the optimal hypothesis $h^*_{cons}$ will also be optimal on the importance-weighted sample:
  \begin{align*}
    \mathrm{E}\left[\mathrm{err}\left(h^*_{cons},S_{AL,n},W_{AL,n}\right)\right]
      &= \min \left\{ \mathrm{E}\left[\mathrm{err}\left(h, S_{AL,n},W_{AL,n}\right)\right] : h \in \mathcal{H}_{cons}\right\}
    \text{.}
  \end{align*}
\end{lemma}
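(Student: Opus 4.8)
The plan is to mirror the proof of the previous lemma, exchanging the role of the random sample selection for the importance-weighted active selection and invoking the unbiasedness of the importance-weighted estimator in place of the unbiasedness of the empirical risk under random sampling. Concretely, I would start from the left-hand side and apply the identity $\mathrm{E}\left[\mathrm{err}\left(h,S_{AL,n},W_{AL,n}\right)\right] = \mathrm{err}\left(h\right)$ -- the unbiasedness result of \citet{Beygelzimer2009} quoted just above the statement -- with $h = h^*_{cons}$ to rewrite $\mathrm{E}\left[\mathrm{err}\left(h^*_{cons},S_{AL,n},W_{AL,n}\right)\right]$ as $\mathrm{err}\left(h^*_{cons}\right)$.

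Next I would use the definition of $h^*_{cons}$ as the minimiser of $\mathrm{err}\left(h\right)$ over $\mathcal{H}_{cons}$ to replace $\mathrm{err}\left(h^*_{cons}\right)$ with $\min\left\{\mathrm{err}\left(h\right) : h \in \mathcal{H}_{cons}\right\}$. Finally I would apply the unbiasedness identity a second time, now in the other direction and for every $h \in \mathcal{H}_{cons}$ simultaneously, to turn each $\mathrm{err}\left(h\right)$ back into $\mathrm{E}\left[\mathrm{err}\left(h, S_{AL,n},W_{AL,n}\right)\right]$ inside the minimisation, which yields the claimed equality. The entire argument is the three-line chain
\begin{align*}
  \mathrm{E}\left[\mathrm{err}\left(h^*_{cons},S_{AL,n},W_{AL,n}\right)\right]
    &= \mathrm{err}\left(h^*_{cons}\right) \\
    &= \min\left\{\mathrm{err}\left(h\right) : h \in \mathcal{H}_{cons}\right\} \\
    &= \min\left\{\mathrm{E}\left[\mathrm{err}\left(h, S_{AL,n},W_{AL,n}\right)\right] : h \in \mathcal{H}_{cons}\right\} \text{.}
\end{align*}

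I do not expect a genuine obstacle here: the statement is a direct substitution built on a property already established in the literature. The one point that merits a sentence of care is that the unbiasedness $\mathrm{E}\left[\mathrm{err}\left(h,S_{AL,n},W_{AL,n}\right)\right] = \mathrm{err}\left(h\right)$ holds for \emph{each fixed} hypothesis $h$ separately -- this is exactly what the importance-weighting construction of \citet{Beygelzimer2009} guarantees, since the weight $w_x = 1/P_x$ is chosen precisely so that the weighted per-example loss is an unbiased estimate of the expected loss of any $h$, independently of the history-dependent selection probabilities. Because the identity is uniform in $h$, the expectation and the minimisation over $\mathcal{H}_{cons}$ can be interchanged termwise, and no interchange-of-limits or measurability subtleties beyond those already implicit in the cited result are introduced.
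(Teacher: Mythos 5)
Your three-line chain is exactly the paper's own proof: rewrite the left side via the unbiasedness identity $\mathrm{E}\left[\mathrm{err}\left(h,S_{AL,n},W_{AL,n}\right)\right]=\mathrm{err}\left(h\right)$, use the definition of $h^*_{cons}$ as the minimiser of the true error, and apply the identity again termwise inside the minimum. The additional remark that unbiasedness holds for each fixed $h$ is a sensible clarification but does not change the argument.
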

\begin{proof}
  \begin{align*}
    \mathrm{E}\left[\mathrm{err}\left(h^*_{cons},S_{AL,n},W_{AL,n}\right)\right]
      &= \mathrm{err}\left(h^*_{cons}\right) \\
      &= \min \left\{ \mathrm{err}\left(h\right) : h \in \mathcal{H}_{cons}\right\} \\
      &= \min \left\{ \mathrm{E}\left[\mathrm{err}\left(h, S_{AL,n},W_{AL,n}\right)\right] : h \in \mathcal{H}_{cons}\right\}
    \text{.}
  \end{align*}
\end{proof}

In both cases, with importance-weighted active learning and random sampling, the optimal hypothesis has the lowest expected error on the sample selection.
This does not mean that the optimal hypothesis will also be selected.
The hypotheses $h_{cons,RD,n}$ and $h_{cons,AL,n}$ are the hypotheses with the lowest error on the sample selection.
The optimal hypothesis has the best \textit{expected} error and it will have the smallest error if the sample size is unlimited, but in an individual sample selection with a limited size there may be another hypothesis that has a smaller error on the training set.
This difference is relevant for sample reusability.
Since the selected hypotheses $h_{cons,RD,n}$ and $h_{cons,AL,n}$ are not optimal, they will have a larger expected error than the optimal hypothesis $h^*_{cons}$.
Similarly, the hypotheses selected by the selector, $h_{sel,RD,n}$ and $h_{sel,AL,n}$ will also have an expected error that is higher than $h^*_{sel}$.

\begin{definition}[Additional error]
  Let $\varepsilon_{sel,RD,n}$, $\varepsilon_{sel,AL,n}$, $\varepsilon_{cons,RD,n}$ and $\varepsilon_{cons,AL,n}$ represent the extra error introduced by the selection strategy and the sample size of random sampling and active learning, such that
  \begin{align*}
    \mathrm{err}\left(h_{sel,RD,n}\right) &= \mathrm{err}\left(h^*_{sel}\right) + \varepsilon_{sel,RD,n} \\
    \mathrm{err}\left(h_{sel,AL,n}\right) &= \mathrm{err}\left(h^*_{sel}\right) + \varepsilon_{sel,AL,n} \\
    \mathrm{err}\left(h_{cons,RD,n}\right) &= \mathrm{err}\left(h^*_{cons}\right) + \varepsilon_{cons,RD,n} \\
    \mathrm{err}\left(h_{cons,AL,n}\right) &= \mathrm{err}\left(h^*_{cons}\right) + \varepsilon_{cons,AL,n} \\
  \end{align*}
\end{definition}

\begin{assumption}[Functional active learner]
  Assume that the active learner works, i.e., that it produces hypotheses that are better than random sampling, at least for the selector.
  The active hypothesis $h_{sel,AL,n}$ is expected to have a lower error than the random hypothesis $h_{sel,RD,n}$ for an equal sample size:
  \[
    \mathrm{err}\left(h_{sel,AL,n}\right) \le \mathrm{err}\left(h_{sel,RD,n}\right)
    \text{,}
  \]
  which implies that the extra error $\varepsilon_{sel,AL,n} \le \varepsilon_{sel,RD,n}$.
\end{assumption}

\begin{definition}[Sample reusability]
  There is sample reusability with a consumer if the hypothesis produced by active learning is not worse than the hypothesis produced by random sampling:
  \[
    \text{sample reusability if }
    \mathrm{err}\left(h_{cons,AL,n}\right) \le \mathrm{err}\left(h_{cons,RD,n}\right)
    \text{.}
  \]
\end{definition}

We assumed that the active learner is functional, but does that also imply that there is sample reusability?
The rest of this section tries to answer this question.
We try to formulate conditions that guarantee that there is sample reusability between a pair of classifiers.
Note that the conditions should \textit{guarantee} reusability: it is not enough to have reusability in most problems, the conditions should be such that, for classifier pairs that meet them, there is reusability on \textit{all} problems.
In other words: we require that $\mathrm{err}\left(h_{cons,AL,n}\right) \le \mathrm{err}\left(h_{cons,RD,n}\right)$ on \textit{all} problems.

\subsection{Necessary condition 1: $\mathcal{H}_{cons} \subseteq \mathcal{H}_{sel}$ }

The first condition that is necessary to guarantee sample reusability is that the hypothesis space of the consumer is a subset of the hypothesis space of the selector.
Suppose that the hypothesis space of the consumer is \textit{not} a subset of the hypothesis space of the selector, so that $\mathcal{H}_{cons} \backslash \mathcal{H}_{sel} \neq \varnothing$.

The active selection is focused on the best hypothesis in $\mathcal{H}_{sel}$, to approximate $h^*_{sel}$ and to make sure that $\mathrm{err}\left(h^*_{sel}, S_{AL,n},W_{AL,n}\right)$ is better than the $\mathrm{err}\left(h, S_{AL,n},W_{AL,n}\right)$ of every other hypothesis $h \in \mathcal{H}_{sel}$.
With a limited number of samples, more focus on $\mathrm{err}\left(h^*_{sel}\right)$ means less focus on other hypotheses: the active learner undersamples some areas of the sample space.
The hypotheses in $\mathcal{H}_{cons} \backslash \mathcal{H}_{sel}$ are completely new.
The active sample selection may include some information about these hypotheses, but that is uncertain and there will be problems where little or no information is available.
The random sample selection, on the other hand, did not focus on $\mathcal{H}_{sel}$ and therefore did not have to focus less on $\mathcal{H}_{cons}$: it did not undersample any area.
In those cases it is likely that the random sample selection provides more information about $\mathcal{H}_{cons}$ than the active sample selection.

\begin{figure}[ht]
  \centering
    \begin{minipage}[t]{0.47\textwidth}
      \centering
        \begin{tikzpicture}[scale=1, line width=0.3, >=stealth]
          \begin{scope}
            \clip (0,0.4) circle (1cm);
            \fill[color=gray!35] (1,0) circle (1cm);
          \end{scope}
          \begin{scope}[even odd rule]
            \clip (0,0.4) circle (1cm) (-1,-1) rectangle (2,2);
            \fill[color=gray!70] (1,0) circle (1cm);
          \end{scope}

          \draw (0,0.4) circle (1cm)
                (1,0) circle (1cm);
          \draw (0,-0.6) node[anchor=north east] {$\mathcal{H}_{sel}$}
                (1,-1) node[anchor=north] {$\mathcal{H}_{cons}$};

          \fill (-0.6,0.7) circle (0.05cm);
          \draw (-0.6,0.7) node[anchor=west] {$h^*_{sel}$};

          \fill (0.95,-0.4) circle (0.05cm);
          \draw (0.95,-0.4) node[anchor=west] {$h^*_{cons}$};
        \end{tikzpicture}
      \caption{
        If $h^*_{cons} \notin \mathcal{H}_{sel}$, the active selection may not have enough information to find the optimal hypothesis in $\mathcal{H}_{cons} \backslash \mathcal{H}_{sel}$, the grey area: the active learner focused on $h^*_{sel}$ and did not need information for hypotheses outside $\mathcal{H}_{sel}$.
      }
      \label{fig:condition-subset-a}
    \end{minipage}
    \hfill
    \begin{minipage}[t]{0.47\textwidth}
      \centering
        \begin{tikzpicture}[scale=1, line width=0.3, >=stealth]
          \begin{scope}
            \clip (0,0.4) circle (1cm);
            \fill[color=gray!35] (1,0) circle (1cm);
          \end{scope}
          \begin{scope}[even odd rule]
            \clip (0,0.4) circle (1cm) (-1,-1) rectangle (2,2);
            \fill[color=gray!70] (1,0) circle (1cm);
          \end{scope}

          \draw (0,0.4) circle (1cm)
                (1,0) circle (1cm);
          \draw (0,-0.6) node[anchor=north east] {$\mathcal{H}_{sel}$}
                (1,-1) node[anchor=north] {$\mathcal{H}_{cons}$};

          \fill (-0.6,0.7) circle (0.05cm);
          \draw (-0.6,0.7) node[anchor=west] {$h^*_{sel}$};

          \fill (0.2,0.1) circle (0.05cm);
          \draw (0.2,0.1) node[anchor=west] {$h^*_{cons}$};

          \fill (0.95,-0.4) circle (0.05cm);
          \draw (0.95,-0.4) node[anchor=west] {$h$};
        \end{tikzpicture}
      \caption{
        Even if $h^*_{cons} \in \mathcal{H}_{sel}$, the active selection may not have enough information to find $h^*_{cons}$: there may not be enough information to reject every hypothesis in the grey area.
        Does it know that $h$ is not optimal?
      }
      \label{fig:condition-subset-b}
    \end{minipage}
\end{figure}

This can be a problem in several ways.
If the optimal hypothesis $h^*_{cons}$ is one of the new hypotheses, i.e., if $h^*_{cons} \in \mathcal{H}_{cons} \backslash \mathcal{H}_{sel}$ (Figure~\ref{fig:condition-subset-a}), the active learner has less information to find the best hypothesis in that area than the random sample.
It is likely that there are problems where the hypothesis selected by active learning is worse than the hypothesis selected by random sampling, i.e., where $\mathrm{err}\left(h_{cons,AL,n}\right) > \mathrm{err}\left(h_{cons,RD,n}\right)$, which means that there is no sample reusability.

Even if the optimal hypothesis $h^*_{cons}$ is one of the hypotheses in $\mathcal{H}_{sel}$ -- and even if $h^*_{cons} = h^*_{sel}$ -- active learning might still select the wrong hypothesis (Figure~\ref{fig:condition-subset-b}).
The active selection may not have sufficient information to see that every new hypothesis in $\mathcal{H}_{cons} \backslash \mathcal{H}_{sel}$ is worse: there are problems where the information in the active selection is such that there is a hypothesis $h \in \left(\mathcal{H}_{cons} \backslash \mathcal{H}_{sel}\right)$ that, on the active sample selection, is better than $h^*_{cons}$.
This might happen to random samples as well, of course, but it is more likely to happen with the active selection.
In that case, $\mathrm{err}\left(h_{cons,AL,n}\right) > \mathrm{err}\left(h_{cons,RD,n}\right)$, which means that there is no sample reusability.

Apparently, reusability can not be guaranteed if the consumer can find hypotheses that the selector did not have to consider.
There may be reusability in individual cases, but in general, $\mathcal{H}_{cons} \subseteq \mathcal{H}_{sel}$ is a necessary condition for reusability.

\subsection{Necessary condition 2: $h^*_{sel} \in \mathcal{H}_{cons}$}

A second condition is that the optimal hypothesis $h^*_{sel}$ for the selector should also be in the hypothesis space of the consumer.
Suppose that this is not the case, that $\mathcal{H}_{sel} \supseteq \mathcal{H}_{cons}$, but that $h^*_{sel} \notin \mathcal{H}_{cons}$ (Figure~\ref{fig:condition-incons}).

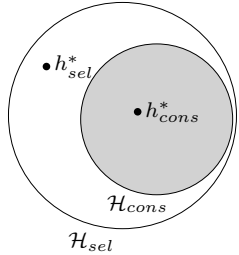
\begin{figure}[h]
  \centering
    \begin{tikzpicture}[scale=1, line width=0.3, >=stealth]
      \fill[color=gray!35] (0.45,0) circle (1cm);

      \draw (0,0.05) circle (1.5cm)
            (0.45,0) circle (1cm);
      \draw (0,-1.4) node[anchor=north east] {$\mathcal{H}_{sel}$}
            (0.2,-0.9) node[anchor=north] {$\mathcal{H}_{cons}$};

      \fill (-1,0.7) circle (0.05cm);
      \draw (-1,0.7) node[anchor=west] {$h^*_{sel}$};

      \fill (0.2,0.1) circle (0.05cm);
      \draw (0.2,0.1) node[anchor=west] {$h^*_{cons}$};
    \end{tikzpicture}
  \caption{
    If $h^*_{sel} \notin \mathcal{H}_{cons}$, the selector has enough samples to show that $h^*_{sel}$ is better than any hypothesis in $\mathcal{H}_{sel}$.
    There may not be enough information to optimise and find $h^*_{cons}$ in $\mathcal{H}_{cons}$.
  }
  \label{fig:condition-incons}
\end{figure}

Then $h^*_{cons}$ was one of the hypotheses that were available to the selector: $h^*_{cons} \in \mathcal{H}_{sel}$.
But it is not the optimal solution in $\mathcal{H}_{sel}$, and that may be a problem.
There will be enough examples to show that $h^*_{sel}$ was better than $h^*_{cons}$, since $h^*_{cons}$ was available to the active learner but was not selected.
But to be selected as $h^*_{cons}$, there should be examples in the sample selection that show that the hypothesis is better than any other $h \in \mathcal{H}_{cons}$.
There is no guarantee that that information is not available: since it was not a question that the selector needed to answer, the examples that are needed to answer the question may not have been selected.
There must be problems where the random sample provides more information near $h^*_{cons}$ than the active selection.
In that case it is likely that $h_{cons,RD,n}$ is closer to $h^*_{cons}$ than $h_{cons,AL,n}$.
This means that $\mathrm{err}\left(h_{cons,AL,n}\right) > \mathrm{err}\left(h_{cons,RD,n}\right)$ and that there is no reusability.

Apparently, reusability can not be guaranteed if the consumer finds a different hypothesis than the selector.
There may be reusability in individual cases, but in general, $h^*_{sel} \in \mathcal{H}_{cons}$ is a necessary condition for reusability.

\subsection{Sufficient conditions?}

The two conditions are necessary to guarantee sample reusability: without $\mathcal{H}_{sel} \supseteq \mathcal{H}_{cons}$ and $h^*_{sel} \in \mathcal{H}_{cons}$ there may be sample reusability in some or even in many problems, but not in all -- if there is any reusability, it is due to luck.
To guarantee reusability the classifiers need to meet these two conditions, and the conditions are quite strong.
The first condition requires that the selector is more powerful than the consumer.
The second condition requires that this extra power is not useful: the selector should not find a solution that is better than the solution of the consumer.
As a result, the conditions can probably only be met by classifiers that are so similar that they produce the same classifier.

The two necessary conditions do much to improve the chance of reusability, but they are still not sufficient to make a guarantee.
The condition $h^*_{sel} \in \mathcal{H}_{cons}$ requires that the selector and the consumer converge to the same hypothesis, but that is only true if there is an infinite sample selection.
In practice, reusability should happen at limited sample sizes.

\newthought
It may be possible to find a condition that guarantees reusability at limited sample size.
Here is a condition that can do this -- although it may be stronger than absolutely necessary.
Consider the situation at a sample size of $n$ samples.
The condition $\mathcal{H}_{sel} \supseteq \mathcal{H}_{cons}$ implies that the selector has access to any hypothesis of the consumer.
Then the best hypothesis of the consumer has an error on the current sample selection that is at least as large as the error of the best hypothesis of the selector:
\begin{align*}
  \mathrm{err}\left(h_{sel,AL,n}, S_{AL,n},W_{AL,n}\right) &\le
  \mathrm{err}\left(h_{cons,AL,n}, S_{AL,n},W_{AL,n}\right) \\
  \mathrm{err}\left(h_{sel,AL,n}, S_{AL,n},W_{AL,n}\right) &\le
  \mathrm{err}\left(h_{cons,AL,n}, S_{AL,n},W_{AL,n}\right) 
\end{align*}
The importance weights in the sample selection make the error on $S_{AL,n}$ an unbiased estimator for the error on unseen data from the true distribution, i.e., $\mathrm{E}\left[\mathrm{err}\left(h,S_{AL,n},W_{AL,n}\right)\right] = \mathrm{err}\left(h\right)$, so the previous inequality can be written as
\[
  \mathrm{err}\left(h_{sel,AL,n}\right) \le
  \mathrm{err}\left(h_{cons,AL,n}\right)
\]
The same holds for random sampling, so
\[
  \mathrm{err}\left(h_{sel,RD,n}\right) \le
  \mathrm{err}\left(h_{cons,RD,n}\right)
\]
Since the active learner is assumed to be functional, the expected error of the classifier selected by self-selection should be better than the expected error with random sampling:
\[
  \mathrm{err}\left(h_{sel,AL,n}\right) \le
  \mathrm{err}\left(h_{sel,RD,n}\right)
\]
but there is only reusability if the classifier of the \textit{consumer} is better with active learning than with random sampling, that is, if
\[
  \mathrm{err}\left(h_{cons,AL,n}\right) \le
  \mathrm{err}\left(h_{cons,RD,n}\right)
\]
One case where this is \textit{guaranteed} is if the expected errors of the selector and consumer hypotheses are the same. Then
\begin{align*}
  \mathrm{err}\left(h_{sel,AL,n}\right) &=
  \mathrm{err}\left(h_{cons,AL,n}\right) \\
  \mathrm{err}\left(h_{sel,RD,n}\right) &=
  \mathrm{err}\left(h_{cons,RD,n}\right) \\
  \mathrm{err}\left(h_{cons,AL,n}\right) & \le
  \mathrm{err}\left(h_{cons,RD,n}\right)
\end{align*}
This is true if $\mathcal{H}_{cons}$ contains both $h_{sel,AL,n}$ and $h_{sel,RD,n}$.

In other words: the hypothesis space of the consumer should not only contain the optimal hypothesis of the selector, but should also contain any intermediate hypotheses (Figure~\ref{fig:condition-path}).
Reusability can be guaranteed if the consumer can follow the same path towards the solution as the selector.

\begin{figure}[h]
  \centering
    \begin{tikzpicture}[scale=1, line width=0.3, >=stealth]
      \fill[color=gray!35] (-0.1,0.2) circle (1.2cm);

      \draw (0,0.05) circle (1.5cm)
            (-0.1,0.2) circle (1.2cm);
      \draw (0,-1.4) node[anchor=north east] {$\mathcal{H}_{sel}$}
            (0.3,-0.97) node[anchor=north] {$\mathcal{H}_{cons}$};

      \fill (-1,0.6) circle (0.05cm);
      \draw (-1,0.6) node[anchor=south west] {$h^*_{cons}$};

      \draw [decorate, decoration={snake, segment length=5mm, amplitude=1.5mm}] (0.0,0.0) -- (-1,0.6);
    \end{tikzpicture}
  \caption{
    One situation where reusability is guaranteed: $\mathcal{H}_{cons}$ should contain all intermediate hypotheses $h_{sel,AL,n}$ on the way to $h^*_{sel}=h^*_{cons}$.
  }
  \label{fig:condition-path}
\end{figure}
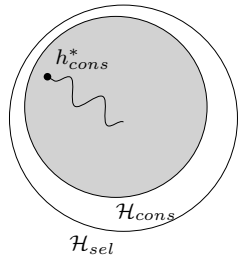

\newthought
This result may not be very useful: it says that you can guarantee reusability if the selector and the consumer that are almost or completely the same.
That is the definition of self-selection.
There is some room between the two required conditions and this sufficient condition, so there might be a less restrictive condition that satisfies $\mathrm{err}\left(h_{cons,AL,n}\right) \le \mathrm{err}\left(h_{cons,RD,n}\right)$ but does not require the two classifiers to be the same.
That, however, might require additional knowledge of the specific classifiers.

An alternative could be to use a more probabilistic approach to the reusability question.
Instead of asking for conditions that can guarantee reusability for every possible problem -- which, as this section shows, leads to a discussion of the few exceptions where it does not work -- it might be enough to find a condition that predicts reusability in most problems, but just not in all.

This would fit in with the proofs of the importance-weighted active learning algorithm itself.
\citet{Beygelzimer2010} do not provide absolute guarantees, but give confidence bounds of the performance of their algorithm relative to random sampling.
It might be possible to provide similar bounds that predict that, with a certain probability, the performance of importance-weighted active learning with foreign-selection is not much worse than the performance of random sampling.
This would probably require strong assumptions about the datasets.

\section{Discussion and conclusion}
\label{sec:discussion}

Active learning is a wonderful idea.
Its promise to deliver better models at a fraction of the cost is very attractive.
But active learning has a dark side -- in fact, rather many dark sides.
The methods that work for the datasets in a study may not work so well on a real dataset, and active learning might give results that are much worse than random sampling.
It is hard to evaluate the quality of the sample selection when it is complete, and it is even harder to predict the result before starting the sampling.
This unpredictability makes it difficult to justify the use of active learning.

Recent developments in importance-weighted active learning have removed some of the problems.
It has a random component, which helps to prevent the missed cluster problem, and it has importance weighting, which helps to remove part of the bias in the sample selection.
The behaviour of the algorithm is also described theoretically: there are proven bounds for the sample complexity, label complexity and the expected error of the active learner.
This makes it one of the best active learning methods developed so far.
Unfortunately, importance-weighted active learning does not solve every problem.

One of the unsolved questions is that of sample reusability, the topic of this paper.
Sample reusability is important in many practical applications of active learning, but it is little-understood.
It would be useful if sample reusability could be predicted, but that is a hard problem.
The only real study on this topic \citep{Tomanek2011} has inconclusive results.
Using uncertainty sampling as the active learning strategy, the study found no pairs of classifiers that always show reusability and did not find a reliable way to predict when reusability would or would not happen.
The study examined a number of hypotheses that could have explained the conditions for reusability, but none of these hypotheses were true.

We investigated the sample reusability in importance-weighted active learning.
The authors of the importance-weighted active learning framework suggest that it produces reusable samples \citep{Beygelzimer2011}.
This is a reasonable idea, because the algorithm solves many of the problems of previous active learning strategies, the importance weighting provides an unbiased sample selection, and therefore this selection might also be more reusable.

However, this paper argued that even importance-weighted active learning does not always produce reusable results.
If the samples selected for one classifier need to be reusable by \textit{every} other classifier, the active sample selection should be at least as good as the random selection.
This is an impossible task, because active learning works by selecting fewer samples than random sampling in some areas of the sample space -- the areas that do not influence the result.
But in general, every sample is interesting to some classifier, so nothing can be left out.

Yes, importance weighting creates an unbiased dataset, so the consumer will always converge to the optimal hypothesis if it is given enough samples.
This makes the sample selection of importance-weighted active learning potentially more reusable than the sample selection of a selection strategy that can not make this guarantee, such as uncertainty sampling.
But in the practical definition of reusability, active learning should also produce a better hypothesis on a \textit{limited} number of samples, and at limited sample sizes importance weighting does not work that well.
It corrects the bias on average, so the expected sample distribution is correct, but individual sample selections are still different from the true distribution.
The expected error of the hypothesis selected with an active sample can be worse than the expected error of the hypothesis selected with a random sample.
Even worse: in some cases, especially at smaller sample sizes, importance weighting introduces so much variability that the results with weighting are worse than the results without.

The results of the practical experiments illustrate that these issues are not only theoretical.
There was reusability in some cases, for specific combinations of classifiers and datasets.
Importance-weighted active learning also seems to have produced selections that are more reusable than those of uncertainty sampling -- although there were also instances where the opposite is true.
However, the many and unpredictable cases where the importance-weighted selections are not reusable make it clear that importance-weighted active learning has not solve the reusability problem.

Are there any certainties in sample reusability?
This paper explored conditions that might guarantee sample reusability, but these conditions were quite strong: to get reusability on all possible datasets, the selector and the consumer should be almost exactly the same.
For guaranteed reusability, the selector should be able to find all hypotheses of the consumer and the consumer should be able to find the optimal hypothesis of the selector.
Even then, if the selector and consumer are not exactly the same, it is possible to find that there is no reusability at smaller sample sizes.
This suggests that true universal reusability is impossible.

\section*{Note}

This paper is a completely revised version of the the \href{http://resolver.tudelft.nl/uuid:af4f9074-774e-4ff9-bab2-b58970b1c990}{Master's thesis} by the first author, Gijs van Tulder, written at the Delft University of Technology in 2012 under the supervision of the second author, Marco Loog \citep{VanTulder2012}.

\bibliographystyle{plainnat}
% use precompiled references

\end{document}